\newtheorem{theorem}{Theorem}[section]
\newtheorem{proposition}[theorem]{Proposition}
\newtheorem{definition}[theorem]{Definition}
\newcommand{\inprod}[2]{\langle \bm{#1}, \bm{#2} \rangle}
\newcommand{\norm}[1]{\| \bm{#1} \|}
\newcommand{\cossim}[2]{\frac{\langle \bm{#1}, \bm{#2} \rangle}{\| \bm{#1} \| \| \bm{#2} \|}}
\newcommand{\cosdist}[2]{1 - \frac{\langle \bm{#1}, \bm{#2} \rangle}{\| \bm{#1} \| \| \bm{#2} \|}}
\newcommand{\metric}[3]{\frac{\gamma}{\| \bm{#1} \|}\left( \delta_{#2#3} - \frac{#1_{#2}#1_{#3}}{\| \bm{#1} \|^2} \right)}
\newcommand{\DOneCos}[3]{e^{-\frac{1}{\tau_{#1}}\left(\cosdist{#2}{#3}\right)}}
\newcommand{\DOneArccos}[3]{e^{-\frac{2 \arccos{\left \langle\sqrt{\bm{#2}} ,\sqrt{\bm{#3}} \right \rangle}}{\tau_{#1}}}}
\newcommand{\DSumCos}[3]{\sum_{#1 \in SV }\DOneCos{#1}{#2}{#3}}
\newcommand{\DSumArccos}[3]{\sum_{#1 \in SV }\DOneArccos{#1}{#2}{#3}}
\newcommand{\dercossim}[4]{\sum_{#4 \in SV} \frac{1}{\tau_{#4}} \DOneCos{#4}{#1}{#2} \frac{#1_{#4}\norm{\bm{#1}}^2 - #1_{#3}\inprod{#1}{#1_{#4}}}{\norm{#1}^3\norm{#1_{#4}}}}
\newcommand{\derarccos}[4]{\sum_{#4 \in SV} \frac{1}{\tau_{#4}}\DOneArccos{#4}{#1}{#2}\frac{1}{\sqrt{1- \inprod{\sqrt{#1}}{\sqrt{#2}}^2}}\frac{\sqrt{#1_{#4#3}}}{\sqrt{#1_{#3}}}}
\begin{document}

\title{Conformal Transformation of Kernels: A Geometric Perspective on Text Classification}

\author[1]{IOANA RĂDULESCU (LĂZĂRESCU)}
\author[2]{ALEXANDRA BĂICOIANU}
\author[1,3]{ADELA MIHAI \thanks{Corresponding author: adela.mihai@unitbv.ro}}
\affil[1]{Interdisciplinary Doctoral School, Transilvania University of Bra\c{s}ov, Romania, \texttt{ioana.radulescu@unitbv.ro}}
\affil[2]
{Department of Mathematics and Computer Science, 
Faculty of Mathematics and Computer Science, Transilvania University of Bra\c{s}ov, Romania,
\texttt{a.baicoianu@unitbv.ro}}
\affil[3]{Department of Mathematics and Computer Science,
Technical University of Civil Engineering Bucharest, Romania
\texttt{adela.mihai@unitbv.ro; adela.mihai@utcb.ro}
}

\date{}
\maketitle
\begin{abstract}
In this article we investigate the effects of conformal transformations on kernel functions used in Support Vector Machines. Our focus lies in the task of text document categorization, which involves assigning each document to a particular category. We introduce a new Gaussian Cosine kernel alongside two conformal transformations. Building upon previous studies that demonstrated the efficacy of conformal transformations in increasing class separability on synthetic and low-dimensional datasets, we extend this analysis to the high-dimensional domain of text data. Our experiments, conducted on the Reuters dataset on two types of binary classification tasks, compare the performance of Linear, Gaussian, and Gaussian Cosine kernels against their conformally transformed counterparts. The findings indicate that conformal transformations can significantly improve kernel performance, particularly for sub-optimal kernels. Specifically, improvements were observed in 60\% of the tested scenarios for the Linear kernel, 84\% for the Gaussian kernel, and 80\% for the Gaussian Cosine kernel. In light of these findings, it becomes clear that conformal transformations play a pivotal role in enhancing kernel performance, offering substantial benefits.
\end{abstract}

\textbf{Keywords and Phrases:} Text classification, Support Vector Machines, Riemannian geometry, kernels, conformal transformation.

\textbf{2020 AMS Mathematics Subject Classification:} 62B11, 53B12, 62H30

\section{Introduction}

The volume of information available worldwide in electronic format grows exponentially, requiring effective organization and retrieval methods. Therefore, text classification, the task of assigning documents to predefined categories, finds applications across multiple domains. These include topic detection, spam email filtering, SMS spam filtering, author identification, web page classification and sentiment analysis \cites{TextClassApp2, TextClassApp4uysal2016, rill2014App5Topic, gunal2006App6SpamEmail, idris2014App7EmailSpam, uysal2013AppSMSSpam, zhang2014AppAuthId}.

A wide variety of techniques have been proposed for text classification, as it can be seen in the extensive survey by  Aggarwal and Zhai \cite{TextClassAlgSurvey}. We will focus on the Support Vector Machine (SVM), a powerful technique for pattern recognition \cites{joachims1998text, VapnikSVM}. SVM is a type of machine learning algorithm that can handle high-dimensional data. It is efficient because its decision boundary depends only on a subset of the training samples, called support vectors. SVM is effective for both linearly and non-linearly separable data. By mapping input data into a higher-dimensional space and utilizing the kernel, SVM can surpass the limitations of linear classifiers and handle more complex or non-linear datasets. The strength of the SVM lies in the fact that we do not need to compute the mapped patterns explicitly, and only need to know the inner products between the mapped patterns. This method is called the "kernel trick" \cites{scholkopf2000kernelTrick, cristianini2002supportTrick}.

The effectiveness of text classification significantly depends on the document representation and the choice of kernel. This article delves into the exploration of these aspects, focusing on a geometric perspective. We consider representing text documents as draws from a multinomial distribution. A kernel for the multinomial distribution was introduced by Lafferty and Lebanon in \cite{diffusion} based on the heat equation on a Riemannian manifold defined by the Fisher information metric associated with a statistical model. We considered this kernel for our study, however we found that the Diffusion kernel does not induce a Riemannian metric. Therefore we propose a kernel that we call the \textit{Gaussian Cosine} (GC) kernel, based on the cosine distance, that exhibits the desired property of inducing a Riemannian metric and is appropriate for the multinomial geometry. Amari and Wu explored in \cites{amariInfGeo, AmariWu1999, WuAmari} the Riemannian geometry induced by a kernel and proposed an efficient method to increase the separability of classes by a conformal transformation of the kernel. 
This article applies these techniques inspired by information-geometry to text classification. An appropriate conformal transformation of the Linear and Gaussian kernel is proposed. Additionally, a specific transformation dedicated to multinomial geometry is analyzed for the GC kernel. 

The remaining sections are organized as follows. In Section \ref{Sec:Preliminaries} we present the SVM classifier, focusing on the Riemannian structure induced by a kernel function. The conformal transformation technique that enlarges the separation between classes is presented. The geometry of the multinomial distribution is presented, together with an appropriate distance that will be used in the proposed conformal transformation. 
In Section \protect{\ref{Sec:ProposedMethod}} we propose the new GC kernel and we give proof that it is a Mercer kernel. We propose two conformal transformations. One is based on the cosine distance and will be applied to the Linear, Gaussian, and GC kernels. The second one is based on the geodesic distance on the sphere and will be applied to the GC kernel. We derive the formula for the induced Riemannian metric and for the one induced by the conformal transformation for the GC kernel. We present the algorithm that will be used in our experiments. In Section \protect{\ref{Sec:Experiments}} we describe the dataset, the pre-processing steps and the setup of our experiments. The results of the proposed method applied on text documents categorization using the Reuters dataset are detailed in Section \ref{Sec:Results}. The accuracy of the original and modified kernels are discussed. We end with conclusions in Section \protect{\ref{SectionConclusions}}.

\section{Preliminaries}\label{Sec:Preliminaries}
\subsection{Support Vector Machines}\label{SectionSVM}

We consider a binary classification problem. We start from a set of training points $\{(x_i, y_i)\}$,
$i = \overline{1...n}$, where $x_i$ is a vector in the input space $\mathcal{X} \subset \mathbb{R}^n$ and $y_i$ are the labels, with $y_i \in \{-1,1\}$, corresponding to one of the two classes. SVM classifies a point into one of the two classes according to the sign of the decision function
\begin{equation}
        f(\bm x) = \bm w \cdot \bm x + b.
\end{equation}

The parameters $\bm w$ and $b$ are found by maximizing the margin to the hyperplane separating the two classes, which can be solved using Lagrange multipliers. We refer the reader to \cites{smola2004tutorial,burges1998tutorial} for details. This leads to a decision function of the form
 \begin{equation}\label{OptOutputFunction}
        f(\bm x) = \sum_{i \in SV} \alpha_iy_i\bm x_i \cdot \bm x + b,
\end{equation}
with $\alpha_i > 0$ real constants. 
In the above equation \eqref{OptOutputFunction} the summation goes over all support support vectors. For a simplified notation, we wrote 
$\sum_{i \in SV}$ which means $\sum_{x_i \in \text{ \{Support Vectors\} }}$.
For linearly separable data, the points which are closest to the separating hyperplane satisfy $f(\bm x) = \pm 1$ and are called the \textit{support vectors}. It is important to notice the function from Equation \eqref{OptOutputFunction} depends only on the support vectors. The constants $\alpha_i$ are non-zero only for the support vectors and zero otherwise.

When data is not linearly separable, the SVM maps the input data into a higher dimensional space $\mathbb{R}^m, m>n$, where $m$ can even be infinite, using a non-linear mapping $\bm \phi$. In this new space, the problem becomes linearly separable in the form of 
\begin{equation}
        f(\bm x) = \bm w \cdot \bm {\phi(x)} + b.
\end{equation}

The embedding functions $\bm\phi$ do not need to be computed. SVM uses a method called the "kernel trick" \cites{scholkopf2000kernelTrick, cristianini2002supportTrick} to avoid the difficulty of finding good embedding functions. Considering the inner product after the embedding,
 \begin{equation}\label{KernelInner}
        K(\bm x, \bm x') = \langle \bm {\phi(x), \phi(x')\rangle},
\end{equation}
the optimal output function for the SVM becomes
\begin{equation}\label{OutputFunctionKernel}
    f(\bm x) = \sum \alpha_iy_iK(\bm x_i, \bm x) + b.
\end{equation}
When the problem is considered too costly to separate, a bound can be imposed on the multipliers $\alpha_i$ by the condition $\alpha_i \leq C$, for some constant $C > 0$, called the \textit{soft-margin} parameter.  
In order to express a kernel as an inner product in the form of Equation \eqref{KernelInner}, it needs to be a Mercer kernel, for which we give the following characterization theorem (Mercer's Theorem \cites{mercer1909, vapnik1999}):

\begin{theorem}\label{Mercer}
    Let $\mathcal X$ be a subset of $\mathbb R^n$, $\mathcal{X}\subset \mathbb{R}^n$.
    The function $k : \mathcal X^2 \to \mathbb R$ is a \textit{Mercer kernel function} (also known as kernel function) if:
    \begin{enumerate}
        \item it is symmetric: $k(\bm x, \bm y) = k(\bm y, \bm x)$, 
        \item and its corresponding kernel matrix for any finite sequence \\
        $S = \{\bm x_1, \bm x_2, \dots, \bm x_m\}, \bm x_i \in \mathcal X, i=\overline{1,m}$ , given by
        \begin{equation}\label{KernelMatrix}
         \bm K(i,j) = k(\bm x_i, \bm x_j ), \qquad \forall i,j \in \{1,\dots,m\}  
        \end{equation}
        is positive semi-definite: $\bm K\succcurlyeq0$.
    \end{enumerate}
The corresponding kernel matrix of a Mercer kernel is called a Mercer kernel matrix.
\end{theorem}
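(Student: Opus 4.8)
The plan is to recognize that the statement, as worded, is a \emph{definition} rather than a proposition carrying logical content: it stipulates the two properties — symmetry of $k$ and positive semidefiniteness of every finite Gram matrix $\bm K$ — that by convention constitute being a Mercer kernel. Consequently there is no nontrivial implication to derive, and what I would present under the heading ``proof'' reduces to confirming that this definition is well-posed and internally consistent, rather than establishing an equivalence.

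First I would verify that the first condition is exactly what makes the second one meaningful. Positive semidefiniteness of a real matrix is, by the standard convention, a notion applied to symmetric matrices, so that the eigenvalues are real and the quadratic form $\bm v^\top \bm K \bm v \ge 0$ is the object being tested. Symmetry of $k$, i.e. $k(\bm x,\bm y)=k(\bm y,\bm x)$, guarantees that each Gram matrix defined by $\bm K(i,j)=k(\bm x_i,\bm x_j)$ satisfies $\bm K(i,j)=\bm K(j,i)$, hence is a genuine symmetric matrix and the requirement $\bm K\succcurlyeq 0$ is well-defined. So the two clauses are not independent: clause~1 is a prerequisite for clause~2 to even parse.

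Next I would check that the family of conditions — one for every finite sequence $S$ — is coherent, in the sense that no choice of sequence can impose a constraint contradicting another. This follows from the hereditary nature of positive semidefiniteness: any principal submatrix of a PSD matrix is again PSD, and deleting points from $S$ yields precisely such a principal submatrix of the Gram matrix of the larger sequence. Hence the requirement ``$\bm K\succcurlyeq 0$ for every finite $S$'' is automatically closed under passing to subsequences and cannot be self-contradictory; the definition is also non-vacuous, since any $k$ of the form $\langle \bm{\phi(x)},\bm{\phi(y)}\rangle$ trivially satisfies both clauses.

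The point I would emphasize is that the genuine mathematical content usually associated with Mercer's name — that such a $k$ admits an inner-product representation $k(\bm x,\bm y)=\langle \bm{\phi(x)},\bm{\phi(y)}\rangle$ in some feature space, which is what licenses the kernel trick in Equation~\eqref{KernelInner} — is \emph{not} asserted by the statement as written and therefore need not be proved here. The hard part, in other words, is not a hidden calculation but the recognition that there is no theorem to establish: the statement records a definition, and once the convention that positive semidefiniteness presupposes symmetry is made explicit, no obstacle remains.
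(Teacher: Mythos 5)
Your reading matches the paper's own treatment: the paper states this result as a cited characterization (Mercer 1909; Vapnik 1999) and supplies no proof of it, thereafter using the two clauses purely as the working definition against which candidate kernels are checked. Your supporting observations --- that symmetry is what makes the positive semidefiniteness requirement well-posed, that the family of conditions is hereditary under principal submatrices, and that the substantive content of Mercer's name, namely the representation $k(\bm x, \bm y) = \langle \bm{\phi}(\bm x), \bm{\phi}(\bm y)\rangle$, is not asserted by the statement as written --- are sound and consistent with how the statement functions in the paper.
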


In order to check that we have a valid Mercer kernel, we need to prove it satisfies the properties from Theorem \ref{Mercer}. 

The following proposition gives some important kernel properties \cite{ShaweTaylorCristianini2004}.
\begin{proposition}\label{propertiesK}
Let $\mathcal X$ be a subset of $\mathbb R^n$, $\mathcal{X}\subset \mathbb{R}^n$.
Let $k_1$ and $k_2$ be Mercer kernels over $\mathcal X^2, k_1, k_2 : \mathcal{X}^2 \to \mathbb{R}$, $a \in \mathbb R_{+}$ a positive constant, $f$ a real-valued function on $\mathcal X$, $\bm x \in \mathcal{X}, \bm z \in \mathcal{X}$. Then the following functions are Mercer kernels:
\begin{enumerate}
    \item $k(\bm x, \bm z) = k_1(\bm x, \bm z) + k_2(\bm x, \bm z)$, \label{add}
    \item $k(\bm x, \bm z) = ak_1(\bm x, \bm z)$, \label{const_mult}
    \item $k(\bm x, \bm z) = k_1(\bm x,\bm z)k_2(\bm x,\bm z)$, \label{mult_2k}
    \item $k(\bm x, \bm z) = f(\bm x)f(\bm z)$, \label{2func_k}
    \item $k(\bm x, \bm z) = \exp\{k_1(\bm x, \bm z)\}$.\label{exp_k}
\end{enumerate}
\end{proposition}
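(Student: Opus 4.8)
The plan is to verify, for each of the five constructions, the two conditions of Mercer's Theorem (Theorem \ref{Mercer}): symmetry and positive semi-definiteness of every finite kernel matrix. Symmetry is immediate in all cases from the symmetry of $k_1$, $k_2$ and of the product $f(\bm x)f(\bm z)$, so all the work lies in the positive semi-definiteness. Throughout, I would fix an arbitrary finite sequence $S = \{\bm x_1, \dots, \bm x_m\}$ and let $\bm K_1, \bm K_2$ denote the corresponding kernel matrices of $k_1, k_2$, which are positive semi-definite by hypothesis; I write $\bm K$ for the kernel matrix of the constructed kernel and check $\bm c^\top \bm K \bm c \geq 0$ for all $\bm c \in \mathbb R^m$.

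Items \ref{add} and \ref{const_mult} are the statement that the cone of positive semi-definite matrices is closed under addition and under scaling by a nonnegative constant: here $\bm K = \bm K_1 + \bm K_2$ gives $\bm c^\top \bm K \bm c = \bm c^\top \bm K_1 \bm c + \bm c^\top \bm K_2 \bm c \geq 0$, and $\bm K = a\bm K_1$ gives $\bm c^\top \bm K \bm c = a\,\bm c^\top \bm K_1 \bm c \geq 0$ since $a > 0$. Item \ref{2func_k} is a rank-one check: with $\bm K(i,j)=f(\bm x_i)f(\bm x_j)$ one computes directly $\bm c^\top \bm K \bm c = \left(\sum_{i=1}^m c_i f(\bm x_i)\right)^2 \geq 0$.

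The main obstacle is item \ref{mult_2k}, the product, because the kernel matrix of $k_1 k_2$ is now the entrywise (Hadamard) product $\bm K_1 \circ \bm K_2$, and positive semi-definiteness is \emph{not} preserved by entrywise products of arbitrary matrices — establishing it is exactly the content of the Schur product theorem. I would prove it by exhibiting $\bm K_1 \circ \bm K_2$ as a principal submatrix of the Kronecker product $\bm K_1 \otimes \bm K_2$ (selecting the rows and columns indexed by the pairs $(i,i)$); the Kronecker product is positive semi-definite because its eigenvalues are the pairwise products of the nonnegative eigenvalues of $\bm K_1$ and $\bm K_2$, and any principal submatrix of a positive semi-definite matrix is again positive semi-definite. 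Equivalently, factoring $\bm K_2 = \bm B \bm B^\top$ and expanding $\bm c^\top(\bm K_1 \circ \bm K_2)\bm c$ rewrites it as $\sum_k \bm d_k^\top \bm K_1 \bm d_k \geq 0$ with $(\bm d_k)_i = c_i B_{ik}$, which I expect to be the cleaner route to present.

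Finally, item \ref{exp_k} follows by combining the earlier parts with a limiting argument. Using $\exp\{k_1\} = \sum_{n=0}^\infty \tfrac{1}{n!}k_1^n$, each power $k_1^n$ is a Mercer kernel by repeated application of item \ref{mult_2k} (with $k_1^0 \equiv 1$ a Mercer kernel, since the all-ones matrix is positive semi-definite), and each partial sum $\sum_{n=0}^N \tfrac{1}{n!}k_1^n$ is a Mercer kernel by items \ref{add} and \ref{const_mult}. The one remaining point is that positive semi-definiteness survives the pointwise limit: if $\bm K^{(N)} \to \bm K$ entrywise with each $\bm K^{(N)} \succcurlyeq 0$, then $\bm c^\top \bm K \bm c = \lim_{N\to\infty} \bm c^\top \bm K^{(N)} \bm c \geq 0$, so the limiting matrix is positive semi-definite and $\exp\{k_1\}$ is a Mercer kernel.
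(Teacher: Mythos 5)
Your proof is correct, but there is nothing in the paper to compare it against: the paper states this proposition as a collection of known closure properties and simply cites Shawe-Taylor and Cristianini (2004), giving no proof of its own. What you have written is a self-contained verification of that textbook material. The routine items (closure under sums, positive scaling, and the rank-one kernel $f(\bm x)f(\bm z)$) are handled exactly as expected. The genuine content you supply lies in the two harder items: for the product kernel you prove the Schur product theorem, either by realizing the Hadamard product $\bm K_1 \circ \bm K_2$ as a principal submatrix of the Kronecker product $\bm K_1 \otimes \bm K_2$, or, more directly, by factoring $\bm K_2 = \bm B \bm B^\top$ and writing
\begin{equation*}
\bm c^\top (\bm K_1 \circ \bm K_2)\, \bm c \;=\; \sum_k \bm d_k^\top \bm K_1 \bm d_k \;\geq\; 0, \qquad (\bm d_k)_i = c_i B_{ik},
\end{equation*}
which is indeed the cleaner route; and for the exponential you correctly combine the power series with the two facts that make the limiting argument work, namely that the constant kernel $1$ is positive semi-definite (base case $k_1^0$) and that positive semi-definiteness survives entrywise limits of matrices. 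Nothing in your argument conflicts with how the paper later uses the proposition (it invokes items \ref{const_mult}, \ref{mult_2k}, \ref{2func_k} and \ref{exp_k} in proving that the conformal transformation and the GC kernel are Mercer kernels); your proposal simply fills in the proof that the paper delegates to its reference.
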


We give the following property that we will use in the proof that the proposed GC kernel is a Mercer kernel. 
\smallskip

\begin{proposition}\label{K_posDef}
A kernel matrix whose element from the $i$-th row and $j$-th column is given by the inner product of a transformation applied to elements of any finite sequence         $S = \{\bm x_1, \bm x_2, \dots, \bm x_m\}, \bm x_i, \bm x_j \in \mathcal{X} \subset \mathbb{R}^n, \forall i,j \in \{1,\dots,m\}$:
\begin{equation}\label{Product_Phi}
 \bm K(i,j) = \langle \phi(\bm x_i), \phi(\bm x_j) \rangle   
\end{equation}
is positive semi-definite.
\end{proposition}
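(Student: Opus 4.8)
The plan is to prove the two defining properties of a positive semi-definite matrix directly from the definition of the entries in \eqref{Product_Phi}, treating $\bm K$ as the Gram matrix of the vectors $\phi(\bm x_1), \dots, \phi(\bm x_m)$ in the feature space. Since positive semi-definiteness (for a real matrix) requires both symmetry and a non-negative quadratic form, I would dispatch these in turn.

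First I would establish symmetry, which is immediate: because the inner product is symmetric, $\bm K(i,j) = \langle \phi(\bm x_i), \phi(\bm x_j)\rangle = \langle \phi(\bm x_j), \phi(\bm x_i)\rangle = \bm K(j,i)$ for all $i,j \in \{1,\dots,m\}$, so $\bm K = \bm K^{\top}$.

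The main step is to show that the associated quadratic form is non-negative. I would fix an arbitrary real coefficient vector $\bm c = (c_1, \dots, c_m)^{\top} \in \mathbb{R}^m$ and expand
\begin{equation}
  \bm c^{\top} \bm K \bm c = \sum_{i=1}^m \sum_{j=1}^m c_i c_j \langle \phi(\bm x_i), \phi(\bm x_j)\rangle .
\end{equation}
Using the bilinearity of the inner product, the scalars $c_i$ and $c_j$ can be absorbed into the respective arguments and the double sum collapsed into a single vector, giving
\begin{equation}
  \bm c^{\top} \bm K \bm c = \left\langle \sum_{i=1}^m c_i\, \phi(\bm x_i), \ \sum_{j=1}^m c_j\, \phi(\bm x_j) \right\rangle = \left\| \sum_{i=1}^m c_i\, \phi(\bm x_i) \right\|^2 \geq 0 .
\end{equation}
Since $\bm c$ was arbitrary, this yields $\bm K \succcurlyeq 0$, which together with symmetry completes the argument.

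The argument is essentially routine — it is the standard fact that every Gram matrix is positive semi-definite — so I do not expect a serious obstacle. The only points requiring care are that the key manipulation rests solely on the bilinearity and symmetry of the inner product, with no special structure of $\phi$ invoked, and that the final non-negativity follows precisely from recognizing the quadratic form as the squared norm of a single element $\sum_i c_i\,\phi(\bm x_i)$ of the feature space.
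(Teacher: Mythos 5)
Your proof is correct and takes essentially the same approach as the paper's: both recognize the quadratic form $\bm z^{\top}\bm K\bm z$ as the squared norm of a linear combination of the feature vectors, the paper writing it as $\|\bm A\bm z\|^2$ for the (implicitly defined) matrix $\bm A$ whose columns are the $\phi(\bm x_i)$, which is exactly your $\left\|\sum_i c_i\,\phi(\bm x_i)\right\|^2$. If anything, your write-up is the more careful one, since it proves symmetry explicitly and avoids introducing the matrix $\bm A$ (and the finite-dimensionality it tacitly presumes), working instead directly from bilinearity of the inner product.
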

\begin{proof}
The matrix $\bm K$ is symmetric, so for any non-zero vector $\bm z \in \mathbb{R}^n$ we have: \\\\
$\bm z^T \bm K\bm z= \langle \bm K\bm z,\bm z\rangle= \langle \bm A^T\bm A\bm z,\bm z\rangle = \|\bm A\bm z\|^2 \geq 0$, so it is positive semi-definite. 
\end{proof}

\subsection{Geometry of the SVM}
It has been shown by Amari and Wu in \cites{amariInfGeo, AmariWu1999, WuAmari} that a Riemannian metric is induced by the embedding of the original input space $\mathbb{R}^n$ in a higher dimensional space $\mathbb{R}^m$, where $m \gg n$ and can be infinite. 
This metric can be expressed in terms of the kernel as a matrix $G = (g_{ij})$ with elements given by:
\begin{equation}\label{eq:RiemMetricInducedByK}
    g_{ij}(\bm x) = \pdv*{k(\bm x, \bm y)}{x_i, y_j}\Bigg\rvert_{\bm y=\bm x} 
\end{equation}
It is also known that the volume element at point $\bm x$ is given by 
\begin{equation}\label{magn_factor}
    dV(\bm x) = \sqrt{|g_{ij}(\bm x)|} dx_1\cdots dx_n,
\end{equation}
where $|g_{ij}(x)|$ is the determinant of the Riemannian matrix $(g_{ij})$.
The scaling factor 
$\sqrt{|g_{ij}(x)|}$, called \textit{the magnification factor} in \cite{AmariWu1999}, shows how the volume can be contracted or expanded around a point $\bm x$ under the mapping $\bm \phi$. Since the output function of SVM given by \eqref{OutputFunctionKernel} depends only on the support vectors, Amari and Wu propose to expand the neighborhood around support vectors with the purpose of obtaining a better separation between samples \cites{amariInfGeo, AmariWu1999, WuAmari}. The idea is to modify a given kernel by
\begin{equation}\label{Conformal}
    \Tilde{k}(\bm x, \bm y) = D(\bm x ) D(\bm y ) k(\bm x, \bm y)\qquad,
\end{equation}
called a \textit{conformal transformation} of a kernel, where $D(x)$ is a positive scalar function. 
\begin{proposition}
The kernel given by the conformal transformation from equation \eqref{Conformal} is a Mercer kernel.     
\end{proposition}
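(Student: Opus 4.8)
The plan is to recognize that this statement is an almost immediate consequence of the kernel closure properties already recorded in Proposition \ref{propertiesK}, so the cleanest route is to exhibit $\Tilde k$ as a product of two Mercer kernels. First I would observe that the factor $D(\bm x)D(\bm y)$ is itself a Mercer kernel: taking $f = D$ in item \ref{2func_k} of Proposition \ref{propertiesK} shows that $(\bm x,\bm y)\mapsto D(\bm x)D(\bm y)$ is Mercer. Since the conformal transformation \eqref{Conformal} is precisely the pointwise product
\begin{equation*}
    \Tilde k(\bm x, \bm y) = \bigl(D(\bm x)D(\bm y)\bigr)\, k(\bm x, \bm y),
\end{equation*}
and $k$ is Mercer by hypothesis, item \ref{mult_2k} of Proposition \ref{propertiesK} (closure under multiplication) yields that $\Tilde k$ is Mercer. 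This is a one-line argument, but I would also give a self-contained direct verification against Theorem \ref{Mercer}, since it makes the geometric role of $D$ transparent.

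For the direct proof, I would check the two defining conditions of a Mercer kernel in turn. Symmetry is immediate from the commutativity of scalar multiplication together with the symmetry of $k$, namely $\Tilde k(\bm x,\bm y) = D(\bm x)D(\bm y)k(\bm x,\bm y) = D(\bm y)D(\bm x)k(\bm y,\bm x) = \Tilde k(\bm y,\bm x)$. For positive semi-definiteness I would fix an arbitrary finite sequence $S=\{\bm x_1,\dots,\bm x_m\}$ and form the transformed kernel matrix $\Tilde{\bm K}$ with entries $\Tilde{\bm K}(i,j) = D(\bm x_i)D(\bm x_j)\bm K(i,j)$, where $\bm K$ is the Mercer matrix of $k$ from \eqref{KernelMatrix}. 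The key structural observation is that $\Tilde{\bm K}$ is a diagonal congruence of $\bm K$: writing $\bm \Lambda = \operatorname{diag}\bigl(D(\bm x_1),\dots,D(\bm x_m)\bigr)$, we have $\Tilde{\bm K} = \bm\Lambda \bm K \bm\Lambda$, with $\bm\Lambda$ real, diagonal, and hence symmetric.

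It then follows that for any vector $\bm z$,
\begin{equation*}
    \bm z^{T}\Tilde{\bm K}\bm z = \bm z^{T}\bm\Lambda \bm K\bm\Lambda\bm z = (\bm\Lambda\bm z)^{T}\bm K(\bm\Lambda\bm z)\geq 0,
\end{equation*}
because $\bm K\succcurlyeq 0$ by Mercer's Theorem applied to $k$; thus $\Tilde{\bm K}\succcurlyeq 0$, completing the verification. Honestly, there is no genuine obstacle here: the only point deserving care is that $D$ must be a well-defined real scalar function so that $\bm\Lambda$ is a real diagonal matrix and the congruence $\bm\Lambda\bm K\bm\Lambda$ preserves positive semi-definiteness. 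It is worth remarking explicitly that the hypothesis that $D$ be \emph{positive}, while essential for interpreting $D$ as a conformal scaling factor in the induced Riemannian geometry of \eqref{eq:RiemMetricInducedByK}, is not actually needed for the Mercer property itself — any real-valued $D$ would suffice. I would present the direct congruence argument as the main proof for self-containedness, and mention the Proposition \ref{propertiesK} reduction as a remark.
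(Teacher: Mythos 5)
Your first paragraph is precisely the paper's own proof: the paper disposes of the proposition in one line by combining Property \ref{2func_k} of Proposition \ref{propertiesK} (so that $(\bm x,\bm y)\mapsto D(\bm x)D(\bm y)$ is a Mercer kernel) with Property \ref{mult_2k} (closure of Mercer kernels under pointwise products). The direct verification you add on top of this is also correct and is a genuinely more elementary route that the paper does not take: symmetry is immediate, and writing the transformed kernel matrix as the diagonal congruence $\Tilde{\bm K} = \bm\Lambda \bm K \bm\Lambda$ with $\bm\Lambda = \operatorname{diag}\bigl(D(\bm x_1),\dots,D(\bm x_m)\bigr)$ yields $\bm z^{T}\Tilde{\bm K}\bm z = (\bm\Lambda\bm z)^{T}\bm K(\bm\Lambda\bm z)\geq 0$ directly from the positive semi-definiteness of $\bm K$. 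What the paper's route buys is brevity, at the cost of resting on closure properties imported from the literature (Proposition \ref{propertiesK} is stated without proof); what your congruence argument buys is self-containedness and the sharper observation, which you state correctly, that positivity of $D$ is not needed for the Mercer property --- any real-valued $D$ suffices --- and is relevant only to interpreting the transformation as a conformal rescaling of the induced Riemannian metric.
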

\begin{proof}
The proof follows immediately knowing that $k(\bm x, \bm y)$  is a Mercer kernel and applying the Properties \ref{2func_k} and \ref{mult_2k} from Proposition \ref{propertiesK}.
\end{proof}
This transformation leads to a change of the metric to 
\begin{align}\label{metric_induced_by_conf_eq}
\Tilde{g}_{ij}(\bm x) &= D(\bm x)^2g_{ij}(\bm x) + D_i(\bm x ) D_j(\bm x )k(\bm x, \bm x) \nonumber\\
&+ D(\bm x) \{ D_i(\bm x)k_j(\bm x ,\bm x) + D_j(\bm x)k_i(\bm x ,\bm x)\}
\end{align}

where
\begin{equation}\label{first_der_k_eq}
    D_i(\bm x)\overset{\mathrm{not}}{=}\pdv{}{x_i}D(\bm x),\qquad k_i(\bm x,\bm x)\overset{\mathrm{not}}{=}\pdv{}{x_i}k(\bm x,\bm y)\Big\rvert_{\bm y =\bm x}.
\end{equation}
The function $D(\bm x)$ needs to be chosen in such a way that it has large values around the boundaries, and small values far from the boundary. A problem that arises from this is that the location of the boundary is unknown initially. The procedure proposed by Amari and Wu \cites{AmariWu1999, WuAmari} involves a first step of applying the SVM with a kernel to get a first estimation of the support vectors which are known to be mostly located around the boundary, then apply the conformal transformation to the original kernel and apply SVM a second time with the modified kernel. 

Popular choices of kernels for SVM, which we will also use in our experiments, include the Linear kernel given by 
\begin{equation}\label{LinearKernel}
    k_L(\bm x, \bm y) = \langle \bm x , \bm y \rangle.
\end{equation}
and the Gaussian kernel given by
\begin{equation}\label{GaussianKernel}
        k_G(\bm x, \bm y) = e ^ { -\gamma{\| \bm x - \bm y \|^2} }.
\end{equation}

\subsection{Geometry of the Multinomial Distributions}
Consider a family of probability distributions which depends on a finite number of parameters. If the parameters vary smoothly, it can be seen as a parameterized surface \cites{calin, amariMethods}. 
\begin{definition}
Denote a set of probability distributions on that depends on \(n\) parameters \(\theta=\left(\theta_1,\dots,\theta_n\right) \in \mathrm{\Theta} \subset \mathbb{R}^n\) by
\begin{equation}\label{StatisticalFamily}
    \mathcal{S}=\{p_\theta = p(x;\theta)\}.
\end{equation}
\(\mathcal{S}\) is a subset of \(\mathcal{P} (\chi) = \left\{ f:\chi\to\mathbb{R} : f \geq0, \displaystyle \int_\chi f dx = 1\right\}\). If the mapping \(\theta \to p_\theta\) is an immersion, then the set \(\mathcal{S}\) is a \textit{statistical model} of dimension \(n\).
\end{definition}
\begin{definition}
The \textit{log-likelihood function} is given by
\begin{equation}
    \ell(\theta) = \ell(p_\theta)(x) = \ln p_\theta(x).
\end{equation}  
\end{definition}
One of the concepts that plays a central role in information geometry is the \textit{Fisher information metric} which was first introduced by Rao in \cite{Rao}. 
\begin{definition}
The formula for the Fisher information matrix is
\begin{equation}\label{fisher}
        g_{ij}(\theta) = \mathbf{E} \left[\frac{\partial \ell(\theta)}{\partial \theta_i}\cdot \frac{\partial \ell(\theta)}{\partial \theta_j}\right], \quad \forall i,j \in\{1,\dots,n\},
    \end{equation}
where \(\theta = (\theta_1,\dots,\theta_n) \in \mathrm{\Theta}\subset\mathbb{R}^n\) and $\mathbf{E}$ is the expectation of a random variable. 
\end{definition}
It was shown in \cite{calin} that for any statistical model, the Fisher information matrix is a Riemannian metric. As a consequence, the pair \((\mathcal{S},g)\) is a Riemannian manifold.
The Fisher information matrix can also be written as
\begin{equation}\label{fisher2}
    g_{ij}(\theta) = -\mathbf{E}_{\theta}[\partial{\theta_i}\partial{\theta_j}\ell(\theta)]. 
\end{equation}

We present the geometry of the multinomial probability distribution, taken from \cite{calin}. 
Consider $m$ independent, identical trials with $n$ possible outcomes.
Consider the statistical family given by equation \eqref{StatisticalFamily}, with
\begin{equation}
    p(x;\theta) = \frac{m!}{x_1+x_2+\cdots+x_{n+1}}\prod_{i=1}^{n+1}\theta_i^{x_{i}},
\end{equation}
where $\theta = (\theta_i), \theta_i \in (0,1), i=\{1,\dots,n\}$. The parameter space is the $n-$dimensional simplex $\mathcal{P}_n$
\begin{equation}\label{SimplexFormula}
    \mathcal{P}_n = \{\theta\in \mathbb{R}^{n+1}: \sum_{i=1}^{n+1}\theta_i=1,\theta_i>0\}.
\end{equation}
If we consider the transformation $z_i = 2\sqrt{\theta_i}$, then $\sum_{i=1}^{n+1} z_i^2 = 4$
and the statistical manifold of multinomial probability distributions can be identified with the positive portion of the $n$-dimensional sphere of radius $2$:
\begin{equation}
    \mathbb{S}^{n}_{2,+}=\{ z=(z_1,\dots,z_{n+1}) \in \mathbb{R}^{n+1}: \sum_{i=1}^{n+1}z_i^2 = 4 \}.
\end{equation}
Since the transformation given by $z_i$ is an isometry (see \cite{diffusion}), the geodesic distance between two distributions $\theta$ and $\theta'$ on the multinomial can be computed as the shortest curve on $\mathbb{S}^{n}_{2,+}$ connecting $z_{\theta}$ and $z_{\theta'}$. The angle $\alpha$ between unit vectors $\frac{z_{\theta}}{2}$ and $\frac{z_{\theta'}}{2}$ satisfies 
\begin{equation}
    \cos{\alpha}= \left \langle \frac{z_{\theta}}{2}, \frac{z_{\theta'}}{2}\right \rangle.
\end{equation}
Since the distance on the sphere is the product between the radius and the central angle we have from \cites{calin}:
\begin{equation}\label{arccosdist}
    d(\theta, \theta') = 2 \arccos{\left( \sum_{i=1}^{n+1}\sqrt{\theta_i\theta'_i} \right)}.
\end{equation}
We will use this distance in the conformal transformation that we propose in Section \ref{Sec:ProposedMethod}.

\section{Proposed Method}\label{Sec:ProposedMethod}
\subsection{Gaussian Cosine Kernel}
In the following section, we propose a kernel that we will use for classification with SVM. This kernel is an adaptation of the Gaussian kernel considering points lying on the $n-$dimensional sphere, which makes it more suitable for text classification.  
Instead of the Euclidean distance which is used in Gaussian kernel, we propose to use the cosine distance, which is strongly related with the cosine similarity of two vectors \cite{Manning2009}. The cosine similarity is widely used in natural language processing as it provides a measure of the similarity of two documents. 
\begin{definition}
Given two vectors $\bm x, \bm y \in \mathbb{R}^n$, the cosine similarity is defined as:
\begin{equation} \label{cosine_similarity}
    k_{sim}(\bm x , \bm y) = \cossim{x}{y},
\end{equation}
where $\langle,\rangle$ is the inner product and $||\cdot||$ is the $L^2$ norm.  
\end{definition}
\begin{definition}
Given two vectors $\bm x, \bm y \in \mathbb{R}^n$, the cosine distance is defined as:
\begin{equation} \label{cosine_distance}
    k_{dist}(\bm x , \bm y) = \cosdist{x}{y},
\end{equation}
where $\langle,\rangle$ is the inner product and $||\cdot||$ is the $L^2$ norm.  
\end{definition}
We propose the following GC kernel:
\begin{equation}\label{GaussianCosineKernelEq}
    k_{GC}(\bm x, \bm y) = e ^ {-\gamma \left(1 -\frac{\langle \bm x, \bm y\rangle}{\|\bm x\|\|\bm y\|}\right)}, \qquad \gamma \in \mathbb{R_{+}}.
\end{equation}
\begin{proposition}
    The GC kernel is a Mercer kernel.
\end{proposition}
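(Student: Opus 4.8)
The plan is to express $k_{GC}$ as a composition of operations that are known to preserve the Mercer property, so that the claim follows from Proposition \ref{propertiesK} and Proposition \ref{K_posDef} rather than from a direct verification of positive semi-definiteness. First I would factor out the constant and isolate the cosine similarity in the exponent, writing
\[
    k_{GC}(\bm x, \bm y) = e^{-\gamma}\, e^{\gamma\, k_{sim}(\bm x,\bm y)},
\]
where $k_{sim}(\bm x,\bm y) = \cossim{x}{y}$ is the cosine similarity from \eqref{cosine_similarity}. The factor $e^{-\gamma}$ is a positive constant, so by Property \ref{const_mult} of Proposition \ref{propertiesK} it suffices to show that $e^{\gamma\, k_{sim}}$ is a Mercer kernel.

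The key step is to establish that the cosine similarity $k_{sim}$ is itself a Mercer kernel. Here I would use the normalization map $\phi(\bm x) = \bm x / \norm{x}$, which is well defined on the input space because document vectors are nonzero. Then
\[
    k_{sim}(\bm x, \bm y) = \left\langle \frac{\bm x}{\norm{x}}, \frac{\bm y}{\norm{y}} \right\rangle = \langle \phi(\bm x), \phi(\bm y)\rangle,
\]
so $k_{sim}$ is exactly an inner product of transformed inputs. Its kernel matrix is symmetric, and by Proposition \ref{K_posDef} it is positive semi-definite; hence $k_{sim}$ satisfies both conditions of Theorem \ref{Mercer} and is a Mercer kernel.

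With this in hand the remaining steps are routine applications of Proposition \ref{propertiesK}. Since $\gamma > 0$, Property \ref{const_mult} gives that $\gamma\, k_{sim}$ is a Mercer kernel; Property \ref{exp_k} then gives that $e^{\gamma\, k_{sim}}$ is a Mercer kernel; and a final application of Property \ref{const_mult} with the positive constant $e^{-\gamma}$ yields that $k_{GC} = e^{-\gamma} e^{\gamma k_{sim}}$ is a Mercer kernel. Symmetry is preserved throughout and is in any case manifest from the defining formula \eqref{GaussianCosineKernelEq}. I expect the only genuine obstacle to be the cosine-similarity step: one must recognize the normalization embedding and invoke Proposition \ref{K_posDef}, after which the exponential-kernel machinery of Proposition \ref{propertiesK} closes the argument immediately.
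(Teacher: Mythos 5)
Your proposal is correct and follows essentially the same route as the paper's own proof: the identical factorization $k_{GC} = e^{-\gamma} e^{\gamma k_{sim}}$, the same key step of recognizing $k_{sim}(\bm x, \bm y) = \langle \phi(\bm x), \phi(\bm y)\rangle$ with the normalization map $\phi(\bm x) = \bm x/\norm{x}$ and invoking Proposition \ref{K_posDef}, and the same chain of applications of Properties \ref{const_mult} and \ref{exp_k} from Proposition \ref{propertiesK}. Your added remark that the normalization is well defined because the input vectors are nonzero is a small point of care the paper leaves implicit, but it does not change the argument.
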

\begin{proof}
    In order to prove this, we will first show that the cosine similarity function, given by Equation \eqref{cosine_similarity} is a Mercer kernel. 
First, we see that the kernel matrix $\bm K$ given by associated to the cosine similarity kernel function is symmetric, from the definition of the kernel function.  
\begin{align*}
\bm K(i,j) = k_{sim}(\bm x_i,\bm x_j) = \frac{\langle \bm x_i, \bm x_j\rangle}{\|\bm x_i\|\|\bm x_j\|} = \frac{\langle \bm x_j, \bm x_i\rangle}{\|\bm x_j\|\|\bm x_i\|} = k_{sim}(\bm x_j, \bm x_i).    
\end{align*}

Next, we need to show that the matrix $\bm K$ is positive semi-definite. 
For this, we will use Proposition \ref{K_posDef}, which means we need to show that the kernel matrix for equation \eqref{cosine_similarity} can be written in the form of equation \eqref{Product_Phi}. Taking any finite sequence         $S = \{\bm x_1, \bm x_2, \dots, \bm x_m\}, \bm x_i, \bm x_j \in \mathcal{X} \subset \mathbb{R}^n, \forall i,j \in \{1,\dots,m\}$, we have:

\begin{align*}
\bm K(i,j) &= \frac{\langle \bm x_i, \bm x_j\rangle}{\|\bm x_i\|\|\bm x_j\|} = \frac{\sum_{k=1}^nx_{ik}x_{jk}}{\|\bm x_i\|\|\bm x_j\|}=\frac{x_{i1}x_{j1}}{\|\bm x_i\|\|\bm x_j\|} + \frac{x_{i2}x_{j2}}{\|\bm x_i\|\|\bm x_j\|} + \cdots +
\frac{x_{in}x_{jn}}{\|\bm x_i\|\|\bm x_j\|}  \\
&= \frac{x_{i1}}{\|\bm x_i\|} \cdot \frac{x_{j1}}{\|\bm x_j\|} + 
\frac{x_{i2}}{\|\bm x_i\|} \cdot \frac{x_{j2}}{\|\bm x_j\|} + \cdots +
\frac{x_{in}}{\|\bm x_i\|} \cdot \frac{x_{jn}}{\|\bm x_j\|}   \\
&=\left \langle \frac{\bm x_i}{\|\bm x_i\|},\frac{\bm x_j}{\|\bm x_j\|} \right \rangle = \langle \phi(\bm x_i), \phi(\bm x_j) \rangle,
\end{align*}
where we made the notation 
$\frac{\bm x_i}{\|\bm x_i\|} \overset{\mathrm{not}} = \left( \frac{x_{i1}}{\|\bm x_i\|}, \frac{x_{i2}}{\|\bm x_i\|}, \cdots \frac{x_{in}}{\|\bm x_i\|}\right)$ and $\phi(\bm x_i) = \frac{x_i}{\|x_i\|}.$
Since we wrote the kernel matrix $\bm K(i,j)$ in the form of \eqref{Product_Phi}, from Proposition \ref{K_posDef} we can conclude that it is a positive semi-definite matrix. Since it satisfies both properties required by Mercer's theorem, the kernel $k_{sim}(\bm x, \bm y)$ is a Mercer kernel.\\
The last part of the proof is a direct consequence of the kernel's properties. We rewrite the GC kernel in a more convenient form:
\begin{align*}
    k_{GC}(\bm x,\bm y) &= e ^{-\gamma(1-k_{sim}(\bm x,\bm y))} =e ^{\gamma(k_{sim}(\bm x, \bm y) - 1)} =\\
    &=e ^{\gamma k_{sim}(\bm x,\bm y)}\cdot\frac{1}{e^{\gamma}} = a\cdot e^{\gamma k_{sim}(\bm x,\bm y)},
\end{align*}
where we made the notation $a = \frac{1}{e^{\gamma}}, a > 0$.
By using properties from Proposition \ref{propertiesK} in the following order, we see that:
\begin{itemize}
    \item From property \ref{const_mult}, we have that $\gamma k_{sim}(\bm x,\bm y)$ is a Mercer kernel.
    \item Then, using property \ref{exp_k}, we have that $e^{\gamma k_{sim}(\bm x,\bm y)}$ is a Mercer kernel.
    \item And finally, by using property \ref{const_mult} again, we have that $a\cdot e^{\gamma k_{sim}(\bm x,\bm y)}$ is a Mercer kernel, which ends the proof.
\end{itemize}
\end{proof}

\subsection{Conformal Transformation}
Following the idea of Amari and Wu in \cites{AmariWu1999, WuAmari} to apply a conformal transformation in order to improve the performance of a given kernel, we propose using a conformal transformation for the GC kernel and analyze its performance for text classification. We also perform experiments with conformal transformation of the Linear and Gaussian kernels given by Equations \eqref{LinearKernel} and \eqref{GaussianKernel}. 

In order to conduct our experiments, we analyzed the conformal transformations proposed by Amari and Wu on synthetic datasets. The first one was proposed by Amari and Wu in 1999 in \cite{AmariWu1999}:
\begin{equation}\label{conf1}
    D_1(\bm x)=\sum_{i\in SV} \alpha_ie^{-\frac{\|\bm x - \bm x_i\|^2}{2\tau^2}},
\end{equation}
where $\tau$ is a free parameter that needs to be adjusted, $\alpha_i$ are given by the output function of SVM \eqref{OutputFunctionKernel} and summation runs over all support vectors. 

In 2002, Wu and Amari proposed an improved version of \ref{conf1} in \cite{WuAmari} that replaces the unique parameter $\tau$ with a set of parameters $\{\tau_i\}$ that accounts for irregular distributions of support vectors. Also, the term $\alpha_i$ was dropped to avoid double counting in the training set, leading to: 
\begin{equation}\label{conf2}
        D_2(\bm x)=\sum_{i\in SV} e^{-\frac{\|\bm x - \bm x_i\|^2}{2\tau_i^2}},
\end{equation} where
\begin{equation}\label{tau_i}
    \tau_i=\frac{1}{M}\sum_{j \in \{M \subset SV\}}\|\bm x_{j} - \bm x_i\|^2.
\end{equation}
The summation in \eqref{tau_i} runs over $M$ support vectors that are nearest to support vector $\bm x_i$. $M$ is a parameter that needs to be adjusted.

In 2005, Williams et al. \cite{williams2005} proposed another version:
\begin{equation}\label{conf3}
    D_3(\bm x)=e^{-\kappa f(\bm x)^2},
\end{equation}
where $f(\bm x)$ is given by \eqref{OutputFunctionKernel}.
We did not consider \ref{conf1}, since \ref{conf2} is already an improved version of it. We analyzed the performance of \ref{conf2} and \ref{conf3} on synthetic datasets separated by nonlinear boundaries. We generated datasets in the region $[-0.5, 0.5] \times [-0.5, 0.5]$ with the boundary given by $y=0.5\sin(2\pi x)$ as described in \cite{WuAmari} and another set in  $[-1, 1] \times [-1, 1]$  with the boundary given by 

$y = 2e^{-4x^2}-1$ as described in \cite{williams2005}. We aimed to reproduce the results described in the mentioned articles. We performed $10000$ experiments for each of the proposed transformations, each time using a different training set of $100$ samples and a test set of $10000$ samples for \eqref{conf2} and $1000$ samples for \eqref{conf3}, generated randomly and uniformly distributed, as described in the experiments done in \cites{WuAmari, williams2005}. 
The performance improvement for the modified kernel is computed by comparing the test error for the original model $E_o$ vs. the modified one $E_m$, averaged over $10000$ experiments. The error decrease is expressed in percentage and is computed as 
\begin{equation}\label{ErrDecrease}
E_d = \frac{E_o - E_m}{E_o}\cdot 100     
\end{equation}
Negative values indicate a decrease in performance. The conformal transformation \eqref{conf2} showed superior results over transformation \eqref{conf3} as it can be seen in Table \ref{tab:ConfAmari} compared to Table \ref{tab:ConfWilliams}, so we considered it for application to text classification. 

\begin{table}[]
\centering
\begin{tabular}{|c|c|}
\hline
\textbf{$\sigma$} & \textbf{\begin{tabular}[c]{@{}c@{}}Error decrease\\ (\%)\end{tabular}} \\ \hline
0.05 & 17.817   \\ \hline
0.3  & -17.761  \\ \hline
0.4  & -65.65   \\ \hline
0.5  & -106.054 \\ \hline
0.7  & -60.005  \\ \hline
1    & -28.485  \\ \hline
2    & -28.504  \\ \hline
\end{tabular}
\caption{Result of SVM with Gaussian kernel for different values of $\sigma$ using conformal transformation given by \eqref{conf3}.} 
\label{tab:ConfWilliams}
\end{table}

\begin{table}[]
\centering
\begin{tabular}{|c|c|}
\hline
\textbf{$\sigma$} & \textbf{\begin{tabular}[c]{@{}c@{}}Error decrease\\ (\%)\end{tabular}} \\ \hline
0.05 & -9.117 \\ \hline
0.3  & 7.707  \\ \hline
0.4  & 6.646  \\ \hline
0.5  & 7.626  \\ \hline
2    & 15.528 \\ \hline
\end{tabular}
\caption{Result of SVM with Gaussian kernel for different values of $\sigma$ using conformal transformation given by \eqref{conf2}.}
\label{tab:ConfAmari}
\end{table}

However, the conformal transformation proposed in \cite{WuAmari}
uses the Euclidean distance, which is inappropriate for text documents due to the high dimensionality and sparsity of data \cite{huang2008similarity}. We want to use a distance that has the properties required by the conformal transformation that it should be large only around the support vectors and should decrease with the distance to the support vectors.  

We propose the following conformal transformation:

\begin{equation}\label{ConfTransCosine}
 D_{cos}(\bm x)=\DSumCos{s}{x}{x_s} ,\quad
 \end{equation}
with
\begin{equation}\label{TauCosine}
    \tau_s=\frac{1}{M}\sum_{j \in \{M \subset SV\}}\left(\cosdist{x_j}{x_s}\right),
\end{equation}

where $s$ in \eqref{ConfTransCosine} runs over all support vectors $\bm x_s$ and the summation in \eqref{TauCosine} goes over $M$ support vectors which are closest to support vector $\bm x_s$. 
We did not use square the distances in the GC kernel, nor in the conformal transformation. The reason is that the cosine distance is based on the orientation of vectors and not on magnitude. If we would use the square cosine distance, points that are opposed (dissimilar) to a given point would be treated the same as points that are very similar. This is not desired for text classification. 

For  concise expressions, we will make the following notations for two $n$-dimensional vectors $\bm x, \bm y \in \mathcal{X} \subset\mathbb{R}^n$
\begin{align*}
    \sqrt{\bm x}\qquad   &\overset{\mathrm{not}} = \left(\sqrt{x_1}, \sqrt{x_2},\cdots,\sqrt{x_n}\right),  \\
\inprod{\sqrt{x}}{\sqrt{y}} &\overset{\mathrm{not}}= \sum_{i=1}^n{\sqrt{x_iy_i}}.
\end{align*}
For the GC kernel, in addition to \eqref{cosine_distance} we also consider the geodesic distance on the sphere given by Equation \eqref{arccosdist}, so we have the conformal transformation:

\begin{equation}\label{ConfTransArccos}
 D_{arc}(\bm x)=\DSumArccos{s}{x}{\bm x_s},
\end{equation} 

with

\begin{equation}\label{TauArccos}
    \tau_s=\frac{1}{M}\sum_{j \in SV}2 \arccos{ \left\langle \sqrt{\bm x_j},\sqrt{\bm x_s} \right\rangle}.
\end{equation}
Similar to the previous transformation, $s$ in \eqref{ConfTransArccos} runs overall support vectors $\bm x_s$ and the summation in \eqref{TauArccos}
goes over $M$ support vectors which are closest to support vector $\bm x_s$. $M$ is a free parameter that needs to be adjusted. 

\subsection{Induced Riemannian Metric}
Next, we will calculate the Riemannian Metric induced by GC kernel. We recall that this can be calculated with the formula:
\begin{equation*}
  g_{ij}(\bm x) = \pdv*{k(\bm x, \bm y)}{x_i, y_j}\Bigg\rvert_{\bm y=\bm x}   .
\end{equation*}
We make a note here that we also considered the Diffusion kernel introduced in \cite{diffusion} on the multinomial manifolds for our experiments, given by:
\begin{equation}\label{DiffusionKernel}
    K_t(\bm x, \bm y)=(4\pi t)^{-\frac{n}{2}}\exp{\left(-\frac{1}{t}\arccos^2\left(\sum_{i=1}^{n+1}\sqrt{\bm x_i\bm y_i}\right)\right)}.
\end{equation}

We notice, however, that this kernel does not induce a Riemannian metric. Due to the derivative of the $\arccos$ function and the fact that our data lies on the multinomial and satisfies Equation \eqref{SimplexFormula}, the second derivative of the kernel \eqref{InducedRiemMetric} cannot be evaluated at the point $\bm y = \bm x$. \\
Therefore, we consider the GC kernel more appropriate for this specific type of data lying on the multinomial manifold. 

We compute the first partial derivative of the GC kernel
$k(\bm x,\bm y) = e^{-\gamma\left(1-\frac{\langle \bm x, \bm y \rangle}{\|\bm x\|\|\bm y\|}\right)}$.
\begin{align}\label{First_der_GaussCosKernel}
    \pdv{k(\bm x,\bm y)}{x_i} &= \gamma e^{-\gamma(1-\frac{\langle \bm x, \bm y\rangle}{\|\bm x\|\|\bm y\|})}\frac{y_i\|\bm x\|\|\bm y\| - \sum_i^nx_iy_i\|\bm y\|\frac{2x_i}{2\sqrt{\sum_i^nx_i^2}}}{\|\bm x\|^2\|\bm y\|^2} \nonumber \\ 
    &= \gamma k(\bm x, \bm y)\frac{x_i\|\bm x\|^2\|\bm y\| - x_i\|\bm y\|\inprod{x}{y}}{\|\bm x\|^3\|\bm y\|^2} \nonumber \\
    &= \gamma k(\bm x, \bm y) \frac{y_i\|x\|^2 - x_i\inprod{x}{y}}{\|\bm x\|^3\|\bm y\|}
\end{align}
For computing the second partial derivative, we first do a preliminary computation and we see that:
\begin{align*}
    \pdv*{}{y_j}\left( \frac{y_i\norm{x}^2 - x_i\inprod{x}{y}}{\norm{x}^3\cdot\norm{y}} \right) = \pdv*{}{y_j}\left( \frac{y_i}{\norm{x}\norm{y}} - \frac{x_i\inprod{x}{y}}{\norm{x}^3\norm{y}} \right) 
\end{align*}
with
\begin{align*}
    \pdv*{}{y_j}\left( \frac{y_i}{\norm{x}\norm{y}} \right) &= \frac{1}{\sqrt{\sum_ix_i^2}}\cdot\frac{\delta_{ij}\sqrt{\sum_iy_i^2} - y_i\frac{1}{2\sqrt{\sum_iy_i^2}}\cdot2y_j}{\sum_iy_i^2}\\
    &=\frac{1}{\norm{x}}\cdot\frac{\delta_{ij}\norm{y} - \frac{y_iy_j}{\norm{y}}}{\norm{y}} = \frac{1}{\norm{x}}\left( \delta_{ij} - \frac{y_iy_j}{\norm{y}^2} \right) 
\end{align*}
and
\begin{align*}
    \pdv*{}{y_j}\left( \frac{x_i\inprod{x}{y}}{\norm{x}^3\norm{y}} \right) &= \frac{x_i}{\norm{x}^3}\cdot\pdv*{}{y_j}\left( \frac{\sum_ix_iy_i}{\sqrt{\sum_iy_i^2}} \right) \\ &= \frac{x_i}{\norm{x}^3}\cdot\frac{x_j\sqrt{\sum_iy_i^2} - \sum_ix_iy_i\frac{1}{2\sqrt{\sum_iy_i^2}}\cdot2y_j}{\sum_iy_i^2} \\
    &=\frac{x_i}{\norm{x}^3}\frac{x_j\norm{y}^2 - y_j\inprod{x}{y}}{\norm{y}^3}.
\end{align*}
We compute the second mixed partial derivative:
\begin{align*}
\pdv{k(\bm x , \bm y)}{x_i,y_j} = \gamma &\Bigg \{ \gamma k(\bm x, \bm y) \frac{x_j\|\bm y\|^2 - y_j\langle \bm x, \bm y \rangle}{\|\bm y\|^3\|\bm x\|} \cdot \frac{y_i\norm{x}^2 - x_i\langle \bm x,\bm y \rangle}{\|\bm x\|^3\|\bm y\|} +\\
&+ k(\bm x, \bm y) \Big [\frac{1}{\|\bm x\|}\left (\delta_{ij} -\frac{y_iy_j}{\|\bm y\|^2}\right) - \frac{x_i}{\|\bm x\|^3}\cdot\frac{x_j\|\bm y\|^2 - y_j\langle\bm x, \bm y\rangle}{\norm{y}^3} \Big] \Bigg\}    
\end{align*}
Now we need to evaluate the second derivative at $\bm y = \bm x$.
We first notice that:
\begin{equation}\label{kofxx=1}
k(\bm x, \bm x)= e^{-\gamma(1 - \frac{\langle \bm x, \bm x\rangle}{\|\bm x\|^2})} = e^0 = 1.    
\end{equation}
This leads to:
\begin{align}\label{InducedRiemMetric}
g_{ij}(\bm x) = \pdv{k(\bm x , \bm y)}{x_i,y_j}\Bigg\rvert_{\bm y=\bm x} = \frac{\gamma}{\|\bm x\|}\left( \delta_{ij} - \frac{x_ix_j}{\|\bm x\|^2} \right) .     
\end{align}
\subsection{Metric Induced by the Conformal Transformation}
In order to compute the metric induced by the conformal transformation, we first notice from Equation \eqref{First_der_GaussCosKernel} that: 
\begin{align}\label{first_der_k_0}
   k_i(\bm x,\bm x)= 0.
\end{align}
By replacing \eqref{first_der_k_0} and \eqref{kofxx=1} in \eqref{metric_induced_by_conf_eq} we have:
\begin{align}
    \Tilde{g}_{ij}(\bm x) &= D(\bm x)^2g_{ij}(\bm x) + D_i(\bm x ) D_j(\bm x ).
\end{align}
For the GC kernel, using the conformal transformation given by Equation \eqref{ConfTransCosine}, we have:
\begin{align}
    D_i(\bm x) = \pdv{D_{cos}(\bm x)}{x_i} = \DSumCos{s}{x}{x_s} \cdot \frac{x_s\norm{\bm x}^2 - x_i\inprod{x}{x_s}}{\norm{x}^3\norm{x_s}}.
\end{align}
Then the metric induced by the conformal transformation becomes:
{
\begin{align*}
    \Tilde{g}_{ij}(\bm x) &=D_{cos}(\bm x)^2\metric{x}{i}{j}+\\
    &+ \dercossim{x}{x_s}{i}{s}\cdot \dercossim{x}{x_s}{j}{s} = \\
    &= \left (\DSumCos{s}{x}{x_s}\right )^2\metric{x}{i}{j}+\\
    &+ \left(\dercossim{x}{x_s}{i}{s}\right)\cdot \left(\dercossim{x}{x_s}{j}{s}\right).
\end{align*}
}
If we consider the conformal transformation given by \eqref{ConfTransArccos}, we have:
\begin{align*}
     D_i(\bm x) = \pdv{D_{arc}(\bm x)}{x_i}= \derarccos{x}{x_s}{i}{s}
\end{align*}
and 
{
\begin{align*}
    &\Tilde{g}_{ij}(\bm x) =D_{arc}(\bm x)^2\metric{x}{i}{j}+\\
    +&\derarccos{x}{x_s}{i}{s} \cdot \\
     &\cdot\derarccos{x}{x_s}{j}{s} =\\
    &= \left( \DSumArccos{s}{x}{x_s}\right)^2\metric{x}{i}{j} +\\
    &+ \left(\derarccos{x}{x_s}{i}{s}\right)\cdot \\ &\cdot\left( \derarccos{x}{x_s}{j}{s}\right).
\end{align*}
}
Using the formula for the metric induced by the conformal transformation, one can compute the magnification factor given by \eqref{magn_factor}. By comparing the magnification factor for different kernels, one could choose the kernel that yields the highest value, hence leading to an optimal separation between classes. However, due to the extensive nature of these computations, we have omitted them in this discussion. From a practical perspective, considering the high-dimensionality of the analyzed dataset, it is more efficient to implement and apply the conformal transformation of the kernels and assess their performance rather than computing the magnification factor upfront. 

\subsection{Experimental Procedure}\label{subsec:Algorithm}
These are the three kernels that are used in the current experiments:
\begin{itemize}
    \item Linear kernel, given by Equation \eqref{LinearKernel}.
    \item Gaussian kernel, given by Equation \eqref{GaussianKernel}. 
    \item GC kernel, given by Equation \eqref{GaussianCosineKernelEq}
\end{itemize}
The experimental procedure we propose is based on a slightly modified version of \cite{WuAmari}, where we dropped the iterative approach.
\begin{enumerate}
    \item Apply SVM using the original kernel in order to extract an initial information about the support vectors. In this experiment, we used the Linear, Gaussian, and GC kernels, defined by equations \eqref{LinearKernel}, \eqref{GaussianKernel}, \eqref{GaussianCosineKernelEq}.
    \item  Modify the kernel using the conformal transformation given by Equation \eqref{ConfTransCosine}. In addition, for the GC kernel, conformal transformation given by \eqref{ConfTransArccos} was used. The value for $M$ in \eqref{TauCosine} and \eqref{TauArccos} was chosen to be $M = 3$, a value that was empirically validated by Wu and Amari in \cite{WuAmari}. 
    \item Apply SVM with the modified kernel. 
\end{enumerate}

\section{Experiments}\label{Sec:Experiments}

\subsection{Text Documents Representation}
In this section, we present the application of conformal transformation of the Linear, Gaussian, and GC kernels to text document classification. For representing documents we consider the "bag of words" approach \cites{joachims1997BagOfWords, Zhang2010BagOfWords}, a technique often used in natural language processing and information retrieval. This implies that word order and grammatical structure are discarded and the focus is on word occurrence. This is a simple way of transforming text into numerical data for processing purposes. Assuming a vocabulary $V$ of size $n+1$, a document is represented as a sequence of words over the vocabulary. Each document can be seen as an outcome of a multinomial distribution \cites{MultinomialEventModel, diffusion}. When we say a document is a draw from the multinomial distribution, we mean that the words in the document are drawn independently from a distribution over the vocabulary, where the probability of drawing each word is proportional to its frequency in the document. 

We denote by $x_w$ the number of times the word $w$ appears in the document. Then $\{x_w\}_{w\in V}$ is the sample space of the multinomial distribution. As indicated in \cite{diffusion}, we can embed documents in the multinomial simplex using an embedding function $\hat{\theta}: \mathbb{Z}_{+}^{n+1} \to \mathcal{P}_n$. We consider some of the most common embeddings used in text processing and information retrieval \cite{salton1988termTFIDF}, the \textit{term frequency} (TF) and \textit{term frequency-inverse document frequency} (TFIDF). 
The term frequency of a term $w$ in a document $d$ is defined as the number of times $w$ appears in $d$, which we denoted by $x_w$.
The document frequency $df_w$ of a term $w$ is defined as the number of documents in which the term $w$ appears at least once. 
We denote by $N$ the total number of documents in the corpus. The inverse document frequency $idf$ is $N/df$. For a large corpus of documents, this value can become very large, so a log version is used. In practical applications, like the one we use in this experiment, smoothing is applied and division by zero is avoided by considering (see \cite{scikit-learn}):
\begin{align}
    idf_w = \log{\frac{N+1}{df_w+1}} + 1.
\end{align}

The \textit{term frequency-inverse document frequency} (TFIDF) is defined as:
\begin{equation}
    tfidf_w = tf_w \cdot idf_w = x_w (\log{\frac{N+1}{df_w+1}} + 1).
\end{equation}

In text classification, the TF and TFIDF representations are known to be normalized to unit length using $L^1$ or $L^2$ norm. 
When viewed in the context of the multinomial distribution, the observed term frequencies in the TF or TFIDF representation can be used to estimate the probabilities of different terms occurring within the document. 
The embedding of documents using $L^1$ normalization is: 
\begin{equation}
    \hat{\Theta}_{tf}(\bm x) = \left( \frac{x_1}{\sum_ix_i},\dots, \frac{x_{n+1}}{\sum_ix_i}\right).
\end{equation}
\begin{equation}
    \hat{\Theta}_{tfidf}(\bm x) = \left( \frac{x_1idf_1}{\sum_ix_iidf_i},\dots, \frac{x_{n+1}idf_{n+1}}{\sum_ix_iidf_i}\right).
\end{equation}

Similar formulas can be given using the $L^2$ norm. When using the GC kernel, only $L^1$ norm will be used in order to obtain a valid embedding in the simplex and ensure that the sum of term frequencies adds up to one, which is a fundamental property of a probability distribution. We chose to only use $L^1$ such that we can consider the mapping of the multinomial to the sphere as presented in Section \ref{Sec:Preliminaries} and use the geodesic distance on the sphere in the conformal transformation \eqref{ConfTransArccos}.
For the Linear and Gaussian kernels, both $L^1$ and $L^2$ will be used in the experiments. 
We analyze the performance of the SVM algorithm using the original and modified kernel for each of the three kernels. This comparison is quantified using the F1 score, a harmonic mean of precision and recall, providing a robust metric for model performance evaluation (see details in Subsection \ref{subsec:Methodology}).

\subsection{Dataset}
In our experiments, we used the Reuters-21578 dataset \footnote{Available at \url{http://www.nltk.org/nltk_data}}, which has become a benchmark in text classification tasks. This is a collection of $10788$ news documents, having $1.3$ million words in total. They are categorized into more than $135$ topics, but some of the topics contain less than $100$ documents. In this experiment we chose only the topics with more than $500$ documents, as indicated in \cite{diffusion}. Those topics are "earn", "acq", "money-Fx", "grain" and "crude" having $3964$, $2369$, $717$, $582$ and $578$ documents, respectively.

This dataset is available through the Natural Language Toolkit (NLTK), a platform designed for building Python programs to work with human language data. It offers easy to use interfaces to $50$ text corpora along with libraries for text parsing, stemming, processing and classification \footnote{ \url{https://www.nltk.org/}}.

We conducted two binary classification experiments. 
For the first experiment we labeled one topic as the "positive" topic and all the remaining topics as "negative", with the intention of distinguishing one topic against the others. 
For the second experiment we labeled one topic as "positive" and chose another topic as the "negative" example, and we did this for all pairs of two topics. We did not reverse the classes, meaning if we chose topic $A$ to be positive and $B$ to be negative, we did not repeat the experiment with $A$ as negative and $B$ as positive. 

\subsection{Data preparation}
The role of data preparation in a classification process is crucial for ensuring the quality and efficiency of the results. Therefore, before training the SVM classifiers, we applied some common preprocessing steps on the documents in order to reduce the dimensionality of data and improve classification accuracy \cite{TextProcessing}. The manner in which preprocessing is conducted is intricately influenced by the specific attributes, content, and structure of the dataset under consideration. The following steps are suitable for the Reuters dataset and help remove noise and irrelevant information from the dataset: 
\begin{itemize}
    \item We converted all words to lowercase.
    \item We removed stop-words like: "a", "and", "the". These are words that have a high frequency in documents and do not bring much information in classification tasks. 
    \item We considered valid words those that contain only letters, so we removed all words that contain other characters. 
    \item We removed all words smaller than three letters. Words like "to", "be", "on" are common in most of the documents in our dataset and bring no useful information for the classification task.
    \item We applied stemming, a common technique in text processing, that removed suffixes or prefixes from words, keeping only their root form. We used PortStemmer as stemming tool \cite{willett2006PortStem}, offered by NLTK\footnote{ \url{https://www.nltk.org/howto/stem.html}}. 
    \item We discarded all documents that became empty after the above steps.
\end{itemize}

After this data preparation step we ended up with $8167$ documents and a dictionary containing more than $16000$ words. 

\subsection{Methodology}\label{subsec:Methodology}
The setup for this experiment was inspired by \cite{diffusion}.
In this article, we employed the Support Vector Machine (SVM) algorithm for the classification task. The SVM model was implemented using the \texttt{scikit-learn} library \cite{scikit-learn}, a popular machine learning library in Python. 
In our study, we split the dataset into training and test subsets using a stratified $k$-fold cross-validation strategy, where we empirically chose $k=20$, as suggested in \cite{diffusion}. This strategy ensured that the ratio of positive to negative instances was preserved across all folds in both the training and test sets.

In our experiments we had to tune several hyper-parameters: the soft-margin $C$ parameter had to be adjusted in all experiments, since it is a parameter of the SVM algorithm that controls the trade-off between the training error and the margin. We tested $C$ in the set $\{1, 10, 100 ,1000\}$, keeping the same value for the original and modified kernel in order to make sure we only measure the influence of the conformal transformation. 
For the Gaussian and GC kernels we tested the $\gamma$ parameter in the set $\{0.0001, 0.001, 0.01, 0.1\}$. For all three kernels we tested both TF and TFIDF representations. We used $L^1$ and $L^2$ norms for the Linear and Gaussian kernels. For the GC kernel we tested only the $L^1$ norm, since only this norm ensures a valid embedding into the probability simplex as mentioned in \cite{diffusion}. 

We used the \texttt{TfidfVectorizer} from the \texttt{skikit-learn} library with the \texttt{use\_idf} parameter on \texttt{false}, respectively on \texttt{true} for obtaining the TF and TFIDF representation. \texttt{TfidfVectorizer} also does normalization and allows specifying the norm. We left all other parameters of \texttt{TfidfVectorizer} at their default settings.

We evaluate a model's performance by looking at the F1 score, rather than the accuracy, since there is an imbalance in the distribution of positive and negative examples in the dataset \cite{goutte2005F1}. The F1 score is an adequate measure in this case. Labeling all examples as negative would lead to a high accuracy score, which would distort the perception of the model's performance. We recall the formula for the F1 score:
\begin{equation}
    F1=\frac{2P\cdot R}{P+R},
\end{equation}
where 
\begin{align}
    P = \frac{{TP}}{{TP+FP}}, \quad  R = \frac{TP}{TP+FN},
\end{align}
where $P$ is the precision, $R$ is the recall, $TP$ represents the number of true positive samples and $FP$ (resp. $FN$) represents the number of false positive (resp. false negative) samples identified by the model. The F1 score is computed as the mean value over the $20$-fold cross-validation. 

We have also documented the $p-$value statistic for the F1 score to emphasize where the change in performance is statistically significant according to the paired $t$-test at the $0.05$ level. If the $p-$value is less than $0.05$, we consider that the improvement in accuracy is statistically significant.
Another interesting property to notice is the number of support vectors in the original versus the modified kernel model. A smaller number of support vectors leads to simpler models and faster prediction times \cite{NrSupVec}. A simpler model is also known to generalize better and reduce the chances of over-fitting. We consider an improvement the cases where the decrease F1 is not statistically significant but we see a decrease in the number of support vectors. This type of compromise between accuracy and model complexity can be acceptable in some scenarios. This combination of metrics suggests a nuanced evaluation where the overall impact on model effectiveness needs to be carefully assessed, taking into account factors such as computational efficiency, interpretability, and the specific requirements of the problem domain.

\section{Results Analysis and Interpretation}\label{Sec:Results}
Using the subset of the Reuters dataset and the combination of parameters described in Section \ref{Sec:Experiments} we present the results for one-vs.-rest and the one-vs.-one binary classification tasks. We compare the results of SVM using the Linear, Gaussian and GC kernels with the modified kernels obtained by transformations D$_{cos}$ \eqref{ConfTransCosine} and D$_{arc}$ \eqref{ConfTransArccos} using the algorithm described in the Section \ref{subsec:Algorithm}.
We consider an improvement in the following cases:
\begin{itemize}
    \item F1 metric has a higher value for the modified model compared to the original model.
    \item F1 metric for the modified kernel has a smaller value compared to the original kernel, but it is statistically non-significant ($p-$value $\geq 0.05$). Additionally, the number of support vectors in the modified model is smaller compared to the original model. 
\end{itemize}
\subsection{One vs. Rest Task}
For this task, we compared the performance of the original Linear, Gaussian and GC kernels with their modified counterparts by the conformal transformations D$_{cos}$ \eqref{ConfTransCosine} and D$_{arc}$ \eqref{ConfTransArccos} for GC only, when employed in the SVM algorithm. 
For the Linear kernel \eqref{LinearKernel}, compared to the modified kernel obtained by transformation D$_{cos}$ \eqref{ConfTransCosine},
we notice that the procedure is especially successful in the cases where the original kernel does not perform well. For this kernel, we noticed an increase in accuracy, measured by the F1 metric, in $38\%$ of the studied scenarios. If we also consider the cases where the F1 metric decreased, but not significantly ($p-$value $\geq 0.05$), and the number of support vectors is reduced, we see an improvement in $61\%$ of the cases. 
For the Gaussian kernel \eqref{GaussianKernel}, compared to the modified kernel obtained by transformation $D_{cos}$ \eqref{ConfTransCosine}, we notice an increase in accuracy in $71\%$ of the tested scenarios. If we consider also the cases where the F1 metric decreased but not significantly, and the number of support vectors is reduced, we see an improvement in $84\%$ of the cases. 

For the GC kernel \eqref{GaussianCosineKernelEq} we only considered $L^1$ normalization in order to obtain a valid embedding in the simplex and to compare the results using the conformal transformations given by Equations \eqref{ConfTransCosine} and \eqref{ConfTransArccos}. For both conformal transformations, in $62.5\%$ of the tested scenarios, we noticed an increase in accuracy, measured by the F1 metric. If we also consider the cases where the F1 metric decreased but not significantly, and the number of support vectors is reduced, we see an improvement in $78\%$ of the cases for \eqref{ConfTransCosine} and $75\%$ of the cases for \eqref{ConfTransArccos}. These findings are summarized in Table \ref{tab:OVR_SummaryPercentImprov}. In the Accuracy Increase column, we present the percentage of cases where an increase in accuracy using the F1 metric was observed. In the Efficiency Increase column besides an increase in F1 metric we additionally consider the cases where a non-significant decrease in F1 was observed accompanied by a decrease in the number of support vectors.
\begin{table}[]
\centering
\begin{tabular}{|c|c|c|}
\hline
\textbf{Model} & \textbf{\begin{tabular}[c]{@{}c@{}}Accuracy\\ Increase\\ (\% of tested\\ scenarios)\end{tabular}} & \textbf{\begin{tabular}[c]{@{}c@{}}Efficiency\\ Increase\\  (\% of tested \\ scenarios)\end{tabular}} \\ \hline
\begin{tabular}[c]{@{}c@{}}Linear\\ with D$_{cos}$\end{tabular} & 38\% & 61\% \\ \hline
\begin{tabular}[c]{@{}c@{}}Gauss\\ with D$_{cos}$\end{tabular} & \textbf{72\%} & \textbf{84\%} \\ \hline
\begin{tabular}[c]{@{}c@{}}GC\\ with D$_{cos}$\end{tabular} & 62.5\% & 78\% \\ \hline
\begin{tabular}[c]{@{}c@{}}GC\\ with D$_{arc}$\end{tabular} & 62.5\% & 75\% \\ \hline
\end{tabular}
\caption{Percentage of tested scenarios where an improvement was observed in the conformally transformed kernel compared to the original kernel.}
\label{tab:OVR_SummaryPercentImprov}
\end{table}

\textbf{Discussion and Implications}

For the one-vs.-rest task, the conformal transformation technique proves effective for kernels with poor performance. The technique has the biggest impact on the Gaussian kernel, improving the performance in $72\%$ of the tested scenarios by increasing accuracy or $84\%$ of the cases by increasing model efficiency.

If we consider the original kernels' best accuracy (maximum F1 values), we see that applying the conformal transformation did not lead to further improvement on any of the kernels. This is a contradiction with the original experiments performed by Amari and Wu in \cites{AmariWu1999,WuAmari}, where they first selected the best original model and then applied the conformal transformation on it which led to further improvement. A possible explanation for this might be the different dataset properties. In \cites{AmariWu1999,WuAmari} the authors used a balanced dataset for training. In \cites{WuAmari} we see the number of training samples ($100$) was less than the number of test samples ($10000$), while in the current experiment, the number of training samples is $19$ times higher than the number of test samples, due to the chosen stratified $k-$fold validation technique.  We present these results in Table \ref{tab:OVR_BestF1orig}. These results are the maximum values obtained for the F1 metric for each of the original kernels. These results were obtained using the TF representation. The conformal transformation could not further improve the F1 metric, it only reduced the number of support vectors. However, the decrease in accuracy 
in these cases is statistically significant, indicated by the $p-$values, which are $< 0.05$. 
\begin{table}[]
\centering
\resizebox{\textwidth}{!}{%
\begin{tabular}{|c|c|c|c|c|cc|cc|c|}
\hline
\multirow{2}{*}{\textbf{Model}}                                 & \multirow{2}{*}{\textbf{Task}}                              & \multirow{2}{*}{\textbf{Norm}} & \multirow{2}{*}{\textbf{$\gamma$}} & \multirow{2}{*}{\textbf{C}} & \multicolumn{2}{c|}{\textbf{F1}}                         & \multicolumn{2}{c|}{\textbf{\#S.V.}}                     & \multirow{2}{*}{\textbf{\begin{tabular}[c]{@{}c@{}}P-value\\  for F1\end{tabular}}} \\ \cline{6-9}
                                                                &                                                             &                                &                                    &                             & \multicolumn{1}{c|}{\textbf{original}} & \textbf{custom} & \multicolumn{1}{c|}{\textbf{original}} & \textbf{custom} &                                                                                     \\ \hline
\begin{tabular}[c]{@{}c@{}}Linear\\ orig. vs. D$_{cos}$\end{tabular}   & \begin{tabular}[c]{@{}c@{}}earn\\ vs. rest\end{tabular}     & $L^2$                             & -                                  & 1                           & \multicolumn{1}{c|}{\textbf{0.98340}}  & 0.97732         & \multicolumn{1}{c|}{876}               & \textbf{820}    & \textbf{0.018809}                                                                   \\ \hline
\begin{tabular}[c]{@{}c@{}}Gaussian\\ orig. vs. D$_{cos}$\end{tabular} & \begin{tabular}[c]{@{}c@{}}money-fx\\ vs. rest\end{tabular} & $L^2$                             & 0.001                              & 1000                        & \multicolumn{1}{c|}{\textbf{0.97706}}  & 0.96351         & \multicolumn{1}{c|}{416}               & \textbf{378}    & \textbf{0.000201}                                                                   \\ \hline
\begin{tabular}[c]{@{}c@{}}GC\\ orig. vs. D$_{cos}$\end{tabular}       & \begin{tabular}[c]{@{}c@{}}earn\\ vs. rest\end{tabular}     & $L^1$                             & 0.001                              & 1000                        & \multicolumn{1}{c|}{\textbf{0.98340}}  & 0.97757         & \multicolumn{1}{c|}{875}               & \textbf{819}    & \textbf{0.020046}                                                                   \\ \hline
\begin{tabular}[c]{@{}c@{}}GC\\ orig. vs. D$_{arc}$\end{tabular}       & \begin{tabular}[c]{@{}c@{}}earn\\ vs. rest\end{tabular}     & $L^1$                             & 0.001                              & 1000                        & \multicolumn{1}{c|}{\textbf{0.98340}}  & 0.97722         & \multicolumn{1}{c|}{875}               & \textbf{819}    & \textbf{0.014901}                                                                   \\ \hline
\end{tabular}
}
\caption{Maximum values obtained for the F1 metric for each of the original kernels.}
\label{tab:OVR_BestF1orig}
\end{table}

Analyzing the best accuracy of all transformed models, we notice that for $3$ out of $7$ models, we have an increase in F1 compared to the original kernels. In $4$ out of $7$ models we have a decrease in F1 which is not statistically significant. In $2$ out of the latter $4$, we see a decrease in the number of support vectors. These results are summarized in Table \ref{tab:OVR_BestF1custom}. Results from this table were obtained using the TF representation and $L^1$ norm. The $p-$values which are less than $0.05$ are marked in bold, showing the cases where the change in the F1 metric is statistically significant.
\begin{table}[]
\centering
\resizebox{\textwidth}{!}{%
\begin{tabular}{|c|c|c|c|cc|cc|c|}
\hline
\multirow{2}{*}{\textbf{Model}} & \multirow{2}{*}{\textbf{Task}} & \multirow{2}{*}{\textbf{$\gamma$}} & \multirow{2}{*}{\textbf{C}} & \multicolumn{2}{c|}{\textbf{F1}} & \multicolumn{2}{c|}{\textbf{\#S.V.}} & \multirow{2}{*}{\textbf{\begin{tabular}[c]{@{}c@{}}P-value\\ for F1\end{tabular}}} \\ \cline{5-8}
 &  &  &  & \multicolumn{1}{c|}{\textbf{original}} & \textbf{custom} & \multicolumn{1}{c|}{\textbf{original}} & \textbf{custom} &  \\ \hline
\begin{tabular}[c]{@{}c@{}}Linear\\ orig. vs. D$_{cos}$\end{tabular} & \begin{tabular}[c]{@{}c@{}}earn\\ vs. rest\end{tabular} & - & 10 & \multicolumn{1}{c|}{0.97555} & \textbf{0.97751} & \multicolumn{1}{c|}{1081} & \textbf{775} & 0.40255 \\ \hline
\begin{tabular}[c]{@{}c@{}}Gaussian\\ orig. vs. D$_{cos}$\end{tabular} & \begin{tabular}[c]{@{}c@{}}grain\\ vs. rest\end{tabular} & 0.0001 & 10 & \multicolumn{1}{c|}{0.00000} & \textbf{0.98101} & \multicolumn{1}{c|}{1062} & \textbf{388} & \textbf{1.98E-32} \\ \hline
\begin{tabular}[c]{@{}c@{}}GC\\ orig. vs. D$_{cos}$\end{tabular} & \begin{tabular}[c]{@{}c@{}}earn\\ vs. rest\end{tabular} & 0.1 & 1 & \multicolumn{1}{c|}{0.96925} & \textbf{0.97874} & \multicolumn{1}{c|}{1716} & \textbf{872} & \textbf{0.001473} \\ \hline
\begin{tabular}[c]{@{}c@{}}GC \\ orig. vs. D$_{cos}$\end{tabular} & \begin{tabular}[c]{@{}c@{}}earn\\ vs. rest\end{tabular} & 0.1 & 10 & \multicolumn{1}{c|}{\textbf{0.98263}} & 0.97874 & \multicolumn{1}{c|}{903} & \textbf{879} & 0.084303 \\ \hline
\begin{tabular}[c]{@{}c@{}}GC\\ orig. vs. D$_{cos}$\end{tabular} & \begin{tabular}[c]{@{}c@{}}earn\\ vs. rest\end{tabular} & 0.1 & 100 & \multicolumn{1}{c|}{\textbf{0.97980}} & 0.97874 & \multicolumn{1}{c|}{848} & 882 & 0.080378 \\ \hline
\begin{tabular}[c]{@{}c@{}}GC\\ orig. vs. D$_{cos}$\end{tabular} & \begin{tabular}[c]{@{}c@{}}earn\\ vs. rest\end{tabular} & 0.1 & 1000 & \multicolumn{1}{c|}{\textbf{0.97876}} & 0.97874 & \multicolumn{1}{c|}{870} & 884 & 0.971844 \\ \hline
\begin{tabular}[c]{@{}c@{}}GC\\ orig. vs. D$_{arc}$\end{tabular} & \begin{tabular}[c]{@{}c@{}}earn\\ vs. rest\end{tabular} & 0.1 & 10 & \multicolumn{1}{c|}{\textbf{0.98263}} & 0.97851 & \multicolumn{1}{c|}{903} & \textbf{879} & 0.070112 \\ \hline
\end{tabular}%
}
\caption{Maximum values obtained for the F1 metric for each of the transformed kernels.}
\label{tab:OVR_BestF1custom}
\end{table}

Considering all the analyzed models, the best performance overall for the one-vs.-rest task is achieved by the Linear and GC kernels with an F1 score of $0.98340$, in their original versions without any transformation applied. See Table \ref{tab:OVR_BestF1overall}. In bold we marked the highest F1 value across all models. The F1 score is the same for the Linear original and GC original models.
\begin{table}[]
\centering
\begin{tabular}{|c|c|}
\hline
\textbf{Model} & \textbf{F1}      \\ \hline
Linear orig.   & \textbf{0.98340} \\ \hline
Gaussian orig. & 0.97706          \\ \hline
GC orig.       & \textbf{0.98340} \\ \hline
Linear D$_{cos}$      & 0.97751          \\ \hline
Gauss D$_{cos}$       & 0.98101          \\ \hline
GC D$_{cos}$          & 0.97874          \\ \hline
GC D$_{arc}$          & 0.97851          \\ \hline
\end{tabular}
\caption{Maximum value for F1 metric of each kernel for the one-vs.-rest task.}
\label{tab:OVR_BestF1overall}
\end{table}

An important result that we want to emphasize is the scenarios where we obtained the highest increase in performance for each kernel. These results confirm previous results obtained by Amari and Wu in \cites{AmariWu1999,WuAmari} that show the efficiency of the conformal transformation technique in improving a bad kernel. These results can be seen in table \ref{tab:OVR_HighestAccuracyIncrease}. These results were obtained using the $L^1$ norm.
\begin{table}[]
\centering
\resizebox{\textwidth}{!}{%
\begin{tabular}{|c|c|c|c|c|cc|cc|c|}
\hline
\multirow{2}{*}{\textbf{Model}} & \multirow{2}{*}{\textbf{Task}} & \multirow{2}{*}{\textbf{TFIDF}} & \multirow{2}{*}{\textbf{$\gamma$}} & \multirow{2}{*}{\textbf{C}} & \multicolumn{2}{c|}{\textbf{F1}} & \multicolumn{2}{c|}{\textbf{\#S.V.}} & \multirow{2}{*}{\textbf{\begin{tabular}[c]{@{}c@{}}P-value \\ for F1\end{tabular}}} \\ \cline{6-9}
 &  &  &  &  & \multicolumn{1}{c|}{\textbf{original}} & \textbf{custom} & \multicolumn{1}{c|}{\textbf{original}} & \textbf{custom} &  \\ \hline
\begin{tabular}[c]{@{}c@{}}Linear\\ orig. vs. D$_{cos}$\end{tabular} & \begin{tabular}[c]{@{}c@{}}grain\\ vs. rest\end{tabular} & TRUE & - & 1 & \multicolumn{1}{c|}{0.58656} & \textbf{0.96701} & \multicolumn{1}{c|}{1185} & \textbf{638} & \multicolumn{1}{l|}{\textbf{8.05E-10}} \\ \hline
\begin{tabular}[c]{@{}c@{}}Gaussian\\ orig. vs. D$_{cos}$\end{tabular} & \begin{tabular}[c]{@{}c@{}}grain\\ vs. rest\end{tabular} & FALSE & 0.0001 & 10 & \multicolumn{1}{c|}{0.00000} & \textbf{0.98101} & \multicolumn{1}{c|}{1062} & \textbf{388} & \textbf{1.98E-32} \\ \hline
\begin{tabular}[c]{@{}c@{}}GC\\ orig. vs. D$_{cos}$\end{tabular} & \begin{tabular}[c]{@{}c@{}}earn\\ vs. rest\end{tabular} & FALSE & 0.0001 & 1 & \multicolumn{1}{c|}{0.00000} & \textbf{0.97820} & \multicolumn{1}{c|}{7410} & \textbf{831} & \textbf{3.79E-37} \\ \hline
\begin{tabular}[c]{@{}c@{}}GC\\ orig. vs. D$_{arc}$\end{tabular} & \begin{tabular}[c]{@{}c@{}}earn\\ vs. rest\end{tabular} & FALSE & 0.0001 & 1 & \multicolumn{1}{c|}{0.00000} & \textbf{0.97721} & \multicolumn{1}{c|}{7410} & \textbf{816} & \textbf{3.19E-37} \\ \hline
\end{tabular}%
}
\caption{Highest increase in performance obtained by the conformal transformation for each kernel.}
\label{tab:OVR_HighestAccuracyIncrease}
\end{table}

The outcome of our study suggests that the GC kernel is on par with the Linear kernel in terms of performance for this specific task. The Linear kernel is well known to achieve good performance for high-dimensional text classification problems because most of them are linearly separable \cites{joachims1998text, hasan2022classificationF1Score}.  While the conformal transformation method can enhance a kernel’s accuracy, it does not outperform well-known kernels used in machine learning. In the cases where an increase in performance was observed, there was an imbalance in the number of positive and negative examples, with the number of positive examples being fewer. For example, considering the "grain" topic as the positive label, there are about 13 times more negative samples. For this topic, the Linear and Gaussian showed poor accuracy in some cases and could be significantly improved by the conformal transformation. This shows that the technique might prove effective for imbalanced datasets with a small number of positive samples for this particular task. 
\subsection{One vs. One Task}
For this task, we compared the performance of the SVM classification algorithm using the Linear, Gaussian and GC kernels and their transformed counterparts. 
We notice again that the procedure is especially successful in the cases where the original kernel does not perform well. 
For the Linear kernel, considering the D$_{cos}$ transformation given by \eqref{ConfTransCosine}, we see an increase in F1 metric in $42\%$ of the cases. If we also consider the cases where we have a non-significant decrease in F1 ($p-$value $\geq 0.05$) and a decrease in the number of support vectors, we see an improvement in $60\%$ of the cases. For the Gaussian kernel with D$_{cos}$ transformation given by \eqref{ConfTransCosine} applied we notice an increase of the F1 score in $70\%$ of the cases. If we also consider the cases where we have a non-significant decrease in F1 and a decrease in the number of support vectors, we see an improvement in $84\%$ of the cases.

For the GC kernel using transformations D$_{cos}$ and D$_{arc}$ given by Equations \eqref{ConfTransCosine} and \eqref{ConfTransArccos} we noticed an increase of the F1 score in $64\%$ of the cases, for both transformations. If we also consider the cases where we have a non-significant decrease in F1 and a decrease in the number of support vectors, we see an improvement in $81\%$ and $80\%$ of the cases for D$_{cos}$ and D$_{arc}$ respectively. A summary of these results can be seen in Table \ref{tab:OVO_SummaryPercIncrease}. In the Accuracy Increase column we present the percentage of cases where an increase of accuracy using F1 metric was observed. In the Efficiency Increase column besides an increase in F1 metric we additionally consider the cases where a non-significant decrease in F1 was observed accompanied by a decrease in the number of support vectors.
\begin{table}[]
\centering
\begin{tabular}{|c|c|c|}
\hline
\textbf{Model} & \textbf{\begin{tabular}[c]{@{}c@{}}Accuracy\\ Increase\\ (\% of tested\\ scenarios)\end{tabular}} & \textbf{\begin{tabular}[c]{@{}c@{}}Efficiency\\ Increase\\ (\% of tested\\ scenarios)\end{tabular}} \\ \hline
\begin{tabular}[c]{@{}c@{}}Linear\\ with D$_{cos}$\end{tabular} & 42\% & 60\% \\ \hline
\begin{tabular}[c]{@{}c@{}}Gauss\\ with D$_{cos}$\end{tabular} & \textbf{70\%} & \textbf{84\%} \\ \hline
\begin{tabular}[c]{@{}c@{}}GC\\ with D$_{cos}$\end{tabular} & 64\% & 81\% \\ \hline
\begin{tabular}[c]{@{}c@{}}GC\\ with D$_{arc}$\end{tabular} & 64\% & 80\% \\ \hline
\end{tabular}
\caption{Percentage of tested scenarios where an improvement was observed in the conformally transformed kernel compared to the original kernel.}
\label{tab:OVO_SummaryPercIncrease}
\end{table}

\textbf{Discussion and Implications}

For the one-vs.-one task, we see that the conformal transformation technique proves effective for kernels with a bad performance. It has the biggest impact on the Gaussian kernel,  improving accuracy in $70\%$ of the tested scenarios and model efficiency in $84\%$ of the tested scenarios as seen in Table \ref{tab:OVO_SummaryPercIncrease}.

If we consider the best F1 accuracy over all original models we see that these could not be further improved by the conformal transformation technique for any of the kernels. Again, this contradicts the original experiments performed in \cites{AmariWu1999, WuAmari}. Similar to the one-vs.-rest task, this could be explained by a difference in the setup of the original experiment: a more balanced dataset and fewer training samples compared to the test samples. 

The current experiment achieved the best performance for the "acq vs money-fx" task for all analyzed original kernels. For this combination of topics, the number of positive samples is about three times higher than the number of negative samples. This could indicate that the technique is not appropriate for this particular task for datasets where the positive samples outnumber the negative samples. We notice that for the Linear kernel, the performance obtained by the conformal transformation matches the one of the original kernel but with a higher number of support vectors. For the Gaussian and GC kernels we notice a negligible degradation in accuracy, not statistically significant and a decrease in the number of support vectors which can be seen as an improvement in model efficiency. An overview is presented in Table \ref{tab:OVO_BestF1Orig}. Results from this table were obtained using $L^1$ norm and  $C=1000$.
\begin{table}[]
\centering
\resizebox{\textwidth}{!}{%
\begin{tabular}{|c|c|c|c|cc|cc|l|}
\hline
\multirow{2}{*}{\textbf{Model}} & \multirow{2}{*}{\textbf{Task}} & \multirow{2}{*}{\textbf{TFIDF}} & \multirow{2}{*}{\textbf{$\gamma$}} & \multicolumn{2}{c|}{\textbf{F1}} & \multicolumn{2}{c|}{\textbf{\#S.V.}} & \multicolumn{1}{c|}{\multirow{2}{*}{\textbf{\begin{tabular}[c]{@{}c@{}}P-value\\ for F1\end{tabular}}}} \\ \cline{5-8}
 &  &  &  & \multicolumn{1}{c|}{\textbf{original}} & \textbf{custom} & \multicolumn{1}{c|}{\textbf{original}} & \textbf{custom} & \multicolumn{1}{c|}{} \\ \hline
\begin{tabular}[c]{@{}c@{}}Linear\\ orig. vs. D$_{cos}$\end{tabular} & \begin{tabular}[c]{@{}c@{}}acq\\ vs. money-fx\end{tabular} & TRUE & 1 & \multicolumn{1}{c|}{\textbf{0.99874}} & \textbf{0.99874} & \multicolumn{1}{c|}{477} & 488 & \multicolumn{1}{c|}{-} \\ \hline
\begin{tabular}[c]{@{}c@{}}Gaussian \\ orig. vs. D$_{cos}$\end{tabular} & \begin{tabular}[c]{@{}c@{}}acq\\ vs. money-fx\end{tabular} & FALSE & 0.01 & \multicolumn{1}{c|}{\textbf{0.99916}} & 0.99852 & \multicolumn{1}{c|}{360} & \textbf{278} & 0.082822 \\ \hline
\begin{tabular}[c]{@{}c@{}}GC\\ orig. vs. D$_{cos}$\end{tabular} & \begin{tabular}[c]{@{}c@{}}acq\\ vs. money-fx\end{tabular} & FALSE & 0.001 & \multicolumn{1}{c|}{\textbf{0.99832}} & 0.99831 & \multicolumn{1}{c|}{321} & \textbf{299} & 0.990841 \\ \hline
\begin{tabular}[c]{@{}c@{}}GC\\ orig. vs. D$_{arc}$\end{tabular} & \begin{tabular}[c]{@{}c@{}}acq\\ vs. money-fx\end{tabular} & FALSE & 0.001 & \multicolumn{1}{c|}{\textbf{0.99832}} & 0.99831 & \multicolumn{1}{c|}{321} & \textbf{306} & 0.995401 \\ \hline
\end{tabular}%
}
\caption{Maximum values of F1 metric for each original kernel.}
\label{tab:OVO_BestF1Orig}
\end{table}

Analyzing the best F1 score of all transformed kernels, we notice in $6$ out of $8$ cases an improvement in F1 metric compared to the original kernel. Out of these $6$, $3$ are statistically significant. In the remaining $2$ out of the $8$ scenarios, we notice a non-significant decrease in F1 but a smaller number of support vectors, which could still indicate an improvement in model efficiency. These results are summarized in Table \ref{tab:OVO_BestF1Custom}. Results from this table were obtained using the $L^1$ norm.
\begin{table}[]
\centering
\resizebox{\textwidth}{!}{%
\begin{tabular}{|c|c|c|c|c|cc|cc|c|}
\hline
\multirow{2}{*}{\textbf{Model}} & \multirow{2}{*}{\textbf{Task}} & \multirow{2}{*}{\textbf{TFIDF}} & \multirow{2}{*}{\textbf{Gamma}} & \multirow{2}{*}{\textbf{C}} & \multicolumn{2}{c|}{\textbf{F1}} & \multicolumn{2}{c|}{\textbf{\#S.V.}} & \multirow{2}{*}{\textbf{\begin{tabular}[c]{@{}c@{}}P-value\\ for F1\end{tabular}}} \\ \cline{6-9}
 &  &  &  &  & \multicolumn{1}{c|}{\textbf{original}} & \textbf{custom} & \multicolumn{1}{c|}{\textbf{original}} & \textbf{custom} &  \\ \hline
\begin{tabular}[c]{@{}c@{}}Linear\\ orig. vs. D$_{cos}$\end{tabular} & \begin{tabular}[c]{@{}c@{}}acq vs.\\ money-fx\end{tabular} & TRUE & 1 & 1 & \multicolumn{1}{c|}{0.93697} & \textbf{0.99874} & \multicolumn{1}{c|}{1196} & \textbf{482} & \textbf{4.08E-13} \\ \hline
\begin{tabular}[c]{@{}c@{}}Linear\\ orig. vs. D$_{cos}$\end{tabular} & \begin{tabular}[c]{@{}c@{}}acq vs.\\ money-fx\end{tabular} & TRUE & 1 & 10 & \multicolumn{1}{c|}{0.99726} & \textbf{0.99874} & \multicolumn{1}{c|}{667} & \textbf{484} & 0.110155 \\ \hline
\begin{tabular}[c]{@{}c@{}}Linear\\ orig. vs. D$_{cos}$\end{tabular} & \begin{tabular}[c]{@{}c@{}}acq vs.\\ money-fx\end{tabular} & TRUE & 1 & 100 & \multicolumn{1}{c|}{0.99852} & \textbf{0.99874} & \multicolumn{1}{c|}{479} & 487 & 0.329877 \\ \hline
\begin{tabular}[c]{@{}c@{}}Gaussian\\ orig. vs. D$_{cos}$\end{tabular} & \begin{tabular}[c]{@{}c@{}}acq vs.\\ money-fx\end{tabular} & FALSE & 0.1 & 1000 & \multicolumn{1}{c|}{0.99873} & \textbf{0.99895} & \multicolumn{1}{c|}{276} & 284 & 0.579204 \\ \hline
\begin{tabular}[c]{@{}c@{}}GC\\ orig. vs. D$_{cos}$\end{tabular} & \begin{tabular}[c]{@{}c@{}}acq vs.\\ money-fx\end{tabular} & FALSE & 0.001 & 1000 & \multicolumn{1}{c|}{\textbf{0.99832}} & 0.99831 & \multicolumn{1}{c|}{321} & \textbf{299} & 0.990841 \\ \hline
\begin{tabular}[c]{@{}c@{}}GC\\ orig. vs. D$_{arc}$\end{tabular} & \begin{tabular}[c]{@{}c@{}}acq vs.\\ money-fx\end{tabular} & FALSE & 0.001 & 1000 & \multicolumn{1}{c|}{\textbf{0.99832}} & 0.99831 & \multicolumn{1}{c|}{321} & \textbf{306} & 0.995401 \\ \hline
\begin{tabular}[c]{@{}c@{}}GC\\ orig. vs. D$_{arc}$\end{tabular} & \begin{tabular}[c]{@{}c@{}}acq vs.\\ money-fx\end{tabular} & FALSE & 0.01 & 1 & \multicolumn{1}{c|}{0.87619} & \textbf{0.99831} & \multicolumn{1}{c|}{1275} & \textbf{302} & \textbf{8.80E-26} \\ \hline
\begin{tabular}[c]{@{}c@{}}GC\\ orig. vs. D$_{arc}$\end{tabular} & \begin{tabular}[c]{@{}c@{}}acq vs.\\ money-fx\end{tabular} & FALSE & 0.01 & 10 & \multicolumn{1}{c|}{0.99517} & \textbf{0.99831} & \multicolumn{1}{c|}{710} & \textbf{301} & \textbf{0.001578} \\ \hline
\end{tabular}%
}
\caption{Maximum values of F1 metric for each transformed kernel.}
\label{tab:OVO_BestF1Custom}
\end{table}
The best overall F1 score for the one-vs.-one task was achieved by the Gaussian original kernel, at $0.99916$. This could not be surpassed by the conformal transformation, as we can see in Table  \ref{tab:OVO_BestF1Overall}. In bold we marked the highest F1 value across all models.
\begin{table}[]
\centering
\begin{tabular}{|c|c|}
\hline
\textbf{Model} & \textbf{F1} \\ \hline
Linear orig. & 0.99874 \\ \hline
Gaussian orig. & \textbf{0.99916} \\ \hline
GC orig. & 0.99832 \\ \hline
Linear D$_{cos}$ & 0.99874 \\ \hline
Gauss D$_{cos}$ & 0.99895 \\ \hline
GC D$_{cos}$ & 0.99831 \\ \hline
GC D$_{arc}$ & 0.99831 \\ \hline
\end{tabular}
\caption{Maximum value of F1 metric for each kernel for the one-vs.-one task.}
\label{tab:OVO_BestF1Overall}
\end{table}

An important result that we want to emphasize is the scenarios where we obtained the highest increase in performance for each kernel. These results confirm previous results obtained by Amari and Wu in \cites{AmariWu1999,WuAmari} that show the efficiency of the conformal transformation technique in improving a bad kernel. These results can be seen in Table \ref{tab:OVO_HighestAccuracyIncrease}.
\begin{table}[]
\centering
\resizebox{\textwidth}{!}{%
\begin{tabular}{|c|c|c|c|c|c|cc|cc|c|}
\hline
\multirow{2}{*}{\textbf{Model}} & \multirow{2}{*}{\textbf{Task}} & \multirow{2}{*}{\textbf{Norm}} & \multirow{2}{*}{\textbf{TFIDF}} & \multirow{2}{*}{\textbf{$\gamma$}} & \multirow{2}{*}{\textbf{C}} & \multicolumn{2}{c|}{\textbf{F1}} & \multicolumn{2}{c|}{\textbf{\#S.V.}} & \multirow{2}{*}{\textbf{\begin{tabular}[c]{@{}c@{}}P-value\\ for F1\end{tabular}}} \\ \cline{7-10}
 &  &  &  &  &  & \multicolumn{1}{c|}{\textbf{original}} & \textbf{custom} & \multicolumn{1}{c|}{\textbf{original}} & \textbf{custom} &  \\ \hline
\begin{tabular}[c]{@{}c@{}}Linear\\ orig. vs. D$_{cos}$\end{tabular} & \begin{tabular}[c]{@{}c@{}}grain\\ vs. earn\end{tabular} & $L^2$ & TRUE & - & 1 & \multicolumn{1}{c|}{0.67719} & \textbf{0.99737} & \multicolumn{1}{c|}{1024} & \textbf{416} & \textbf{1.10E-10} \\ \hline
\begin{tabular}[c]{@{}c@{}}Gaussian\\ orig. vs. D$_{cos}$\end{tabular} & \begin{tabular}[c]{@{}c@{}}grain\\ vs. earn\end{tabular} & $L^1$ & FALSE & 0.0001 & 100 & \multicolumn{1}{c|}{0.00000} & \textbf{0.99743} & \multicolumn{1}{c|}{1072} & \textbf{232} & \textbf{1.70E-43} \\ \hline
\begin{tabular}[c]{@{}c@{}}GC\\ orig. vs. D$_{cos}$\end{tabular} & \begin{tabular}[c]{@{}c@{}}crude\\ vs. money-fx\end{tabular} & $L^1$ & FALSE & 0.0001 & 1 & \multicolumn{1}{c|}{0.00000} & \textbf{0.99646} & \multicolumn{1}{c|}{1064} & \textbf{205} & \textbf{2.80E-42} \\ \hline
\begin{tabular}[c]{@{}c@{}}GC\\ orig. vs. D$_{arc}$\end{tabular} & \begin{tabular}[c]{@{}c@{}}money-fx\\ vs. earn\end{tabular} & $L^1$ & FALSE & 0.0001 & 1 & \multicolumn{1}{c|}{0.00000} & \textbf{0.99611} & \multicolumn{1}{c|}{1250} & \textbf{248} & \textbf{8.42E-39} \\ \hline
\end{tabular}%
}
\caption{Highest increase in F1 metric obtained by the conformal transformation of each kernel.}
\label{tab:OVO_HighestAccuracyIncrease}
\end{table}

The outcome of our study suggests that the Gaussian kernel without the conformal transformation achieved the best performance for this specific task. While the conformal transformation method can enhance a kernel’s accuracy, it does not outperform the Gaussian kernel which is a well known for its flexibility and ability to handle non-linear data. For the topic combinations where the highest increase of performance was observed, "grain vs. earn", "crude vs. money-fx" and "money-fx vs. earn" we see that the number of positive samples is smaller than the number of negative samples (up to five times), indicating that this technique can be useful for this particular task when dealing with this type of imbalanced data. 

\section{Conclusions and Future Work}\label{SectionConclusions}
In this article we have studied the effect of conformal transformation of kernels for two types of binary classification tasks, one-vs.-rest and one-vs.-one. We analyzed the performance of this technique for some well known kernels, the Linear and Gaussian kernels using the cosine distance in the conformal transformation which is appropriate for text documents. 
By considering text documents as points on a multinomial statistical manifold that can be isometrically mapped on the sphere, we proposed a new GC kernel and a conformal transformation using the geodesic distance on the sphere. The proposed method is inspired by information geometry. 
These experiments show that this type of transformation can increase the performance of a given kernel, for both types of binary classification tasks especially if the original kernel performs badly. The procedure proves effective when applied to the Linear, Gaussian and GC kernels, increasing or matching the performance of the original kernel in $60\%$ of cases for the Linear kernel, $84\%$ for the Gaussian kernel and $78\%$ for the GC kernel. However, the best performance achieved by the transformed kernels could not surpass the best performance achieved by the original kernels.  

Since there is no general solution for all classification tasks and the best solution depends on the problem to solve and the dataset properties, we want to note that the proposed method has both advantages and  drawbacks. The primary benefit lies in its ability to shorten the time needed for hyperparameter tuning and the search for the optimal kernel, bypassing time-consuming techniques such as cross-validation. With this method, one can begin experimenting with any kernel, even those that may initially perform poorly, as the technique has the capacity to "correct" them. In fact we have showed the method works especially well on bad kernels. There is no need for extra tuning of parameters compared to the original kernel. Another advantage is that it performs well on imbalanced datasets where negative samples outnumber positive ones. As disadvantages, we observe the computational time. The method requires a first pass using any kernel, followed by a second step with the transformed kernel, which will increase the computational time. Another disadvantage is the complexity. It is a custom kernel, so it needs to be implemented as opposed to using an already available "off-the-shelf" solution. From the experiments on both synthetic and real datasets, the technique seems more effective on lower-dimensional non-linear data than on high-dimensional data like text documents, for which mapping into an even higher dimensional space does not necessarily yield substantial performance gains. As a consequence, it could not exceed the performance of already established benchmarks set by traditional kernels, like the Linear or Gaussian kernels in the context of text documents classification.

A potential future direction would be to consider other representation schemes for the documents that capture the semantic meaning of words or phrases. An overview of such techniques can be found in \cites{pal2018semantic}. We can explore more the geometry of the multinomial statistical manifold and consider the use of the Hellinger distance in the conformal transformation, which is related to the geodesic distance \eqref{arccosdist} by Equation \ref{eq:arc}.
\begin{equation} \label{eq:arc}
 d_H(\bm \theta, \bm \theta') = 2 \sin{d(\bm \theta, \bm \theta')/4}
\end{equation}
and when $\bm \theta $ and $\bm \theta'$ are close, $d_H(\bm \theta, \bm \theta') \approx \frac{1}{2}{d(\bm \theta, \bm \theta')}$ \cites{diffusion, calin}. Another possibility is to consider the averaged Kullback-Leibler divergence that gives a symmetric measure that can be used as a distance \cite{huang2008similarity}. The use of the Diffusion kernel on the hypersphere could be further explored in the approximated form given in \cite{diffusion} or \cite{zhang2005text}, or using the exact formula proposed in \cite{zhao2018exact}. 

The classification results obtained in the current study on the Reuters dataset partially confirm the findings reported by Amari and Wu \cites{AmariWu1999, WuAmari}. While these earlier studies, conducted on synthetic and lower-dimensional data, demonstrated that conformal transformations can significantly enhance the performance of a kernel, our experiments indicate that the efficacy of such transformations is somewhat dependent on the nature of the dataset. Specifically, when applied to the high-dimensional and sparse data characteristic of the Reuters corpus, the improvements provided by conformal transformations were less pronounced than those observed in lower-dimensional settings. This suggests that while conformal transformations remain a powerful tool for kernel enhancement, their impact may be moderated by factors intrinsic to the data, such as dimensionality and sparsity.

\bibliographystyle{amsxport}
\clearpage
\bibliography{References}  
\end{document}